\providecommand*{\Dashv}{%
  \mathrel{%
    \mathpalette\@Dashv\vDash
  }%
}
\newcommand*{\@Dashv}[2]{%
  \reflectbox{$\m@th#1#2$}%
}
\tikzset{modal/.style={>=stealth',shorten >=1pt,shorten <=1pt,auto,node distance=1.5cm,semithick},world/.style={circle,draw,minimum size=0.5cm,fill=gray!0},point/.style={circle,draw,inner sep=0.5mm,fill=black},reflexive above/.style={->,loop,looseness=7,in=120,out=60},reflexive below/.style={->,loop,looseness=7,in=240,out=300},reflexive left/.style={->,loop,looseness=7,in=150,out=210},
  texto/.style = {
    font = \scriptsize,
    align = center,},
reflexive right/.style={->,loop,looseness=7,in=30,out=330}}
\newcommand{\norm}[1]{\left\lVert#1\right\rVert}
\newcommand{\nrcr}{\not\kern-0.07cm\mid\!\sim}
\newcommand{\npr}{\not\kern-0.07cm\mid\!\!\!-}
\newtheorem{theorem}{Theorem}
\newtheorem{definition}{Definition}
\newtheorem{Definition}{Definition}
\newtheorem{proposition}{Proposition}
\newtheorem{lemma}{Lemma}
\newtheorem{Lemma}{Lemma}
\newtheorem{Example}{Example}
\newtheorem*{Example-non}{Example}
\newtheorem{corollary}{Corollary}
\newenvironment{proof}{\noindent{\bf Proof.}}{\hfill $\blacksquare$}
\title{Learning Probabilities: Towards a Logic of Statistical Learning}
\author{Alexandru Baltag
\institute{ILLC, University of Amsterdam\\
Amsterdam, Netherlands}
\email{A.Baltag@uva.nl}
\and
Soroush Rafiee Rad \qquad\qquad\qquad Sonja Smets
\institute{\qquad\qquad University of Bayreuth \qquad\qquad ILLC, University of Amsterdam\\
\qquad\qquad\qquad Bayreuth, Germany \qquad\qquad\qquad\qquad Amsterdam, Netherlands}
\email{\quad soroush.r.rad@gmail.com \quad\qquad S.J.L.Smets@uva.nl}
}
\begin{document}
\maketitle

\begin{abstract}
We propose a new model for forming beliefs and learning about unknown probabilities (such as the probability of picking a red marble from a bag with an unknown distribution of coloured marbles). The most widespread model for such situations of `radical uncertainty' is in terms of imprecise probabilities, i.e. representing the agent's knowledge as a set of probability measures. We add to this model a plausibility map, associating to each measure a plausibility number, as a way to go beyond what is known with certainty and represent the agent's beliefs about probability. There are a number of standard examples: Shannon Entropy, Centre of Mass etc. We then consider learning of two types of information: (1) learning by repeated sampling from the unknown distribution (e.g. picking marbles from the bag); and (2) learning higher-order information about the distribution (in the shape of linear inequalities, e.g. we are told there are more red marbles than green marbles). The first changes only the plausibility map (via a `plausibilistic' version of Bayes' Rule), but leaves the given set of measures unchanged; the second shrinks the set of measures, without changing their plausibility. Beliefs are defined as in Belief Revision Theory, in terms of truth in the most plausible worlds. But our belief change does not comply with standard AGM axioms, since the revision induced by (1) is of a non-AGM type. This is essential, as it allows our agents to learn the true probability: we prove that the beliefs obtained by repeated sampling converge almost surely to the correct belief (in the true probability). We end by sketching the contours of a dynamic doxastic logic for statistical learning.\end{abstract}

\section{Introduction}

Our goal in this paper is to propose a new model for {\it learning a probabilistic distribution},
in cases that are commonly characterized as those of ``radical uncertainty" \cite{wally} or ``Knightian uncertainty'' \cite{CV}. As an example, consider an urn full of marbles, coloured red, green and black, but with an unknown distribution. What is then the probability of drawing a red marble? In such cases, when the agent's information is not enough to determine the probability distribution, she is typically left with a huge (usually infinite) \emph{set} of probability assignments. If she never goes beyond what she knows, then her only `rational' answer should be ``I don't know": she in a state of \emph{ambiguity}, and she should simply consider possible \emph{all} distributions that are consistent with her background knowledge and observed evidence. Such a ``Buridan's ass" type of rationality will not help our agent much in her decision problems.

Our model allows the agent to go beyond what she knows with certainty, by forming \emph{rational qualitative beliefs about} the unknown distribution, beliefs based on the inherent plausibility of each possible distribution. For this, we assume the agent is endowed with an initial \emph{plausibility map}, assigning real numbers to the possible distributions. To form beliefs, the agent uses an AGM-type of \emph{plausibility maximization}: she believes the most plausible distribution(s). So we equate ``belief" with \emph{truth in all the most plausible worlds}, whenever such most plausible worlds exist; while in more general settings, we follow the standard generalization of this notion of belief as ``truth in all the worlds that are \emph{plausible enough}".  This is the standard definition of qualitative belief in Logic and Belief Revision Theory. As a consequence, all the usual KD45 axioms of doxastic logic will be valid in our framework. The plausibility map encodes the agent's background beliefs and a priori assumptions about the world. For instance, an agent whose a priori assumptions include the Principle of Indifference will use Shannon entropy as her plausibility function, thus initially believing the most non-informative distribution(s). An agent who assumes some form of Ockham's Razor will use as plausibility some measure of simplicity, thus her initial belief will focus on the simplest distribution(s), etc. Note that, although our plausibility map assigns real values to probability distributions, this account is essentially different from the ones using so-called ``second-order probabilities"(i.e. probabilities distributions defined on the set of probability distributions). Plausibility values are only relevant in so far as they induce a qualitative order on distributions. In contrast to probability, plausibility is \emph{not cumulative} (in the sense that the low-plausibility alternatives do not add up to form more plausible sets of alternatives), and as a result only the distributions with the \emph{highest} plausibility play a role in defining beliefs.

Our model is not just a way to ``rationally" select a Bayesian prior, but it also comes with a rational method for \emph{revising beliefs} in the face of new evidence. In fact, it can deal with \emph{two types of new information}: first-order evidence gathered by repeated \emph{sampling} from the (unknown) distribution; and higher-order information about the distribution itself,  coming in the form of \emph{linear inequality constraints} on that distribution. To see the difference between the two types of new evidence, take for instance the example of a coin. As it is well known any finite sequence of Heads and Tails is consistent with all possible biases of the coin. As such, any number of finite repeated samples {\em will not} shrink the set of possible biases, though they may make increase the plausibility of some biases. Thus this type of information changes only the plausibility map but leaves the given set of measures unchanged. The second type of information, on the other hand, shrinks the set of measures, without changing their plausibility. As for instance learning that the coin has a bias towards Tail (e.g. by weighing the coin, or receiving a communication in this sense from the coin's manufacturer) eliminates all distributions that assign a higher probability to Heads. It is important to notice, however, that even with higher order information it is hardly ever the case that the distribution under consideration is fully specified. In our coin example, a known bias towards Tails will still leave a infinite set of possible biases consistent. Even a good measurement by weighting will leave open a whole interval of possible biases. In this sense a combination of observations and higher order information will \emph{not} in general allow the agent to come to {\em know} the correct distribution in the standards sense in which the term knowledge is used in doxastic and epistemic logics. Instead, it may eventually allow her to come to \emph{believe} the true probability (at least, with a high degree of accuracy). This ``convergence in belief" is what we aim to capture in this paper.

Our belief revision mechanism after sampling is non-Bayesian (and also different from the AGM belief revision), though it incorporates a ``plausibilistic" version of Bayes' Rule. Instead of updating her prior belief according to this rule (and disregarding all other possible distributions), the agent keeps all possibilities in store and \emph{revises instead her plausibility relation} using an analogue of Bayes' Rule. After that, her new belief will be formed in a similar way to her initial belief: by maximizing her (new) plausibility. The outcome is different from simply performing a Bayesian update on the `prior': qualitative jumps are possible, leading to abandoning ``wrong" conjectures in a non-monotonic way. This results in a \emph{faster} convergence-in-belief to the true probability in \emph{less restrictive conditions} than the usual Savage-style convergence through repeated Bayesian updating.\footnote{In contrast to Savage's theorem, our update ensures convergence even in the case that the initial set of possible distributions is infinite (indeed, even in the case we start with the uncountable set of \emph{all} distributions). Moreover, in the finite case (where Savage's result does apply), our update is guaranteed to converge in finitely many steps, while Savage's theorem only ensures convergence in the limit.} Note also that the belief update induced by sampling does \emph{not} satisfy all the standard AGM axioms for belief revision. This is essential for learning the true probability from repeated sampling: since every sample is logically consistent with every distribution, an AGM learner would never change her initial belief!

The second type of evidence (higher-order information about the distribution) induces a more familiar kind of update: the distributions that do not satisfy the new information (typically given in the former of linear inequalities) are simply eliminated, then beliefs are formed as before by focusing on the most plausible remaining distributions. This form of revision is known as AGM \emph{conditioning} in Belief Revision Theory (and as \emph{update}, or ``public announcement", in Logic), and satisfies all the standard AGM axioms.

The fact that in our setting there are two types of updates should not be so surprising. It is related to the fact that our static framework consists of two different semantic ingredients, capturing two different attitudes: the \emph{set} of possible distributions (encoding the agent's \emph{knowledge} about the correct distribution), and the \emph{plausibility} map (encoding the agent's \emph{beliefs}). The second type of (higher-order) information directly affects the agent's knowledge (by reducing the set of possibilities), and only indirectly her beliefs (by restricting the plausibility map to the new set, so beliefs are only updated with fit the new knowledge). Dually, the first type of (sampling) evidence acts directly affects the agent's beliefs (by changing the plausibility in the view of the sampling results), and only indirectly her knowledge (since e.g. she knows her new beliefs).

The plan of this paper follows. We start by reviewing some basic notions, results and examples on probability distributions (Section 2). Then in Section 3, we define our main setting (probabilistic plausibility frames),
consider a number of standard examples (Shannon Entropy, Center of Mass etc), then formalize the updates induced by the two types of new information, and prove our main result on convergence-in-belief. In Section 4, we sketch the contours of a dynamic doxastic logic for statistical learning. We end with some concluding remarks and a brief comparison with other approaches to the same problem (Section 5).


\section{Preliminaries and Notation}

Take a finite set $O=\{o_1, \ldots, o_n\}$ and  let
$M_{O} = \{\mu\in [0,1]^O \vert \,
\sum_{o \in O} \mu(o)=1\}$
be the set of probability mass functions on $O$, which we identify with the corresponding probability functions on $\mathcal{P}(O)$.
Let $\Omega= O^{\infty} = O^{\mathbb{N}}$ be the set of infinite sequences from $O$, which we shall refer to as {\em observation streams}. For any $\omega \in \Omega$ and $i \in \mathbb{N}$, we write $\omega_i$ for the $i$-th component of $\omega$, and $\omega^{i}$ for its initial segment of length $i$, that is $\omega_1, \ldots, \omega_i$.  For each $o \in O$ we define the sets $o^{j}$ to be the cylinders $\, o^{j}=\{\omega \in \Omega \, \vert \,\, \omega_{j}=o \} \subseteq \Omega.$
Let $\mathcal{A} \subseteq \mathcal{P}(\Omega)$ be the $\sigma$-algebra of subsets of $\Omega$ generated by the cylinders. Every probability distribution $\mu \in M_{O}$ induces a unique probability function, $\hat{\mu}$ over $(\Omega, \mathcal{A})$ by setting $\hat{\mu}(o^{j})= \mu(o)$ which extends to all of $\mathcal{A}$ using independence. Let $\mathcal{E}$ be the subalgebra of $\mathcal{A}$ that is closed under complementation and  {\em finite} unions and intersections of the cylinder sets. Then $\mathcal{E}$ will capture the set of events generated by finite sequences of observations.

\begin{Example}
Let $O=\{H, T\}$ be the possible outcomes of a coin toss. Then $\Omega$ will be streams of {\em Heads} and {\em Tails} representing infinite tosses of the coin, e.g. HTTHHH.... And $H^{j}$ (res. $T^{j}$) will be the set of streams of observations in which the $j$-th toss of the coin has landed Heads (res. Tails). The set $M_O$ will be the set of possible biases of the coin.
\end{Example}

\begin{Example}
Let $O=\{R, B, G\}$ be the possible outcomes for a draw from an urn filled with marbles, coloured Red, blue and Green. Then the set $M_O$ will be the set of different distribution of coloured marbles in the urn, $\Omega$ will be streams of {\em R}, {\em B} and {\em G} representing infinite draws from the urn, and $R^{j}$ (res. $B^{j}$ or $G^{j}$) will be the set of streams of draws in which the $j$-th draw is a Red (res. Blue or Green) marble.
\end{Example}

\par\noindent{\bf Topology on $M_{O}$} Notice that a probability function $\mu \in M_O$, defined over the set $O=\{o_1, \ldots, o_n\}$, can be identified with an $n$-dimensional vector $(\mu(o_1), \ldots, \mu(o_n))$, corresponding to the probabilities assigned to each $o_i$ respectively. Let $\mathcal{D}_O := \{\vec{x} \in [0,1]^{n} \, \vert \, \sum x_i =1\},$ then every $\mu \in M_O$ can be identified with the point $\vec{\mu} \in \mathcal{D}_O \subset [0, 1]^n$. Thus probability functions in $\mathcal{M}_{O}$ live in the space $\mathbb{R}^{n}$ (or more precisely $[0, 1]^{n}$). In the other direction every $\vec{x} \in \mathcal{D}_O$ defines a probability function $x$ on $O$ by setting $x(o_i) = \vec{x}_i$. This gives a one to one correspondence between $M_O$ and $\mathcal{D}_O$.
There are various metric distances that can be defined on the space of probability measures over a (finite) set $O$ many of which are known to induce the same topology. Here we will consider the \emph{standard topology} of $\mathbb{R}^n$,induced by the Euclidean metric: for $\vec{x}, \vec{y} \in \mathbb{R}^n$, put $d(\vec{x}, \vec{y}) := \sqrt{ \sum_{i=1}^{n}(x_i-y_i)^{2}}$; a basis for the standard topology is given by the family of all \emph{open balls ${\mathcal B}_\epsilon(\vec{x})$ centred at some point $\vec{x}\in \mathbb{R}^n$ with radius $\epsilon>0$}; where
$${\mathcal B}_\epsilon(\vec{x})= \{\vec{y} \in \mathbb{R}^n \, \vert\, d(\vec{x},\vec{y})<\epsilon\}.$$

\begin{proposition}\label{comp}
For a finite set $O$, the set of probability mass functions on $O$, $\mathcal{M}_{O}$, is compact in the standard topology.
\end{proposition}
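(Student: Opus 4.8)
The plan is to exploit the identification, established just above the statement, between $\mathcal{M}_{O}$ and the set $\mathcal{D}_O = \{\vec{x} \in [0,1]^n \,\vert\, \sum_i x_i = 1\} \subseteq \mathbb{R}^n$, under which the topology on $\mathcal{M}_{O}$ is by definition the subspace topology induced from the standard (Euclidean) topology on $\mathbb{R}^n$. Since this identification is a bijection and the topology on $\mathcal{M}_{O}$ is precisely the one pulled back along it, it is a homeomorphism, so it suffices to show that $\mathcal{D}_O$ is a compact subspace of $\mathbb{R}^n$. For that I would invoke the Heine--Borel theorem: a subset of $\mathbb{R}^n$ is compact exactly when it is closed and bounded.

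Boundedness is immediate, since $\mathcal{D}_O \subseteq [0,1]^n \subseteq {\mathcal B}_{\sqrt{n}+1}(\vec{0})$, so $\mathcal{D}_O$ sits inside a ball of finite radius. For closedness I would exhibit $\mathcal{D}_O$ as an intersection of two closed sets: the closed cube $[0,1]^n$ (a finite product of closed intervals, hence closed in $\mathbb{R}^n$), and the affine hyperplane $H = \{\vec{x} \in \mathbb{R}^n \,\vert\, \sum_i x_i = 1\}$, which is the preimage $f^{-1}(\{1\})$ of the closed singleton $\{1\}$ under the continuous map $f(\vec{x}) = \sum_i x_i$, hence closed. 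An intersection of closed sets is closed, so $\mathcal{D}_O = [0,1]^n \cap H$ is closed in $\mathbb{R}^n$.

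Combining the two observations, $\mathcal{D}_O$ is closed and bounded in $\mathbb{R}^n$, hence compact by Heine--Borel, and transporting this back along the homeomorphism yields compactness of $\mathcal{M}_{O}$. Alternatively, if one prefers to avoid citing Heine--Borel, the same conclusion follows from the Bolzano--Weierstrass property: any sequence in $\mathcal{M}_{O}$ corresponds to a sequence of vectors in the bounded set $[0,1]^n$, which therefore has a convergent subsequence, and its limit $\vec{x}$ still satisfies $x_i \in [0,1]$ and $\sum_i x_i = 1$ (both conditions being preserved under taking limits), hence lies in $\mathcal{D}_O$. There is no genuine obstacle here; the only point deserving a word of care is the remark that the topology on $\mathcal{M}_{O}$ really is the one transported from $\mathbb{R}^n$ via the stated correspondence, so that topological compactness of $\mathcal{D}_O$ transfers verbatim to $\mathcal{M}_{O}$.
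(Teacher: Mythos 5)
Your proof is correct and follows essentially the same route as the paper, whose proof is simply the instruction to ``check that the set $\{\vec{x} \in [0,1]^n \,\vert\, \sum_i x_i = 1\}$ is compact in $\mathbb{R}^n$''; you carry out exactly that check via Heine--Borel (closed and bounded), with the transport along the identification $\mathcal{M}_O \cong \mathcal{D}_O$ made explicit.
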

\begin{proof}
Check that the set $\{\vec{X} \in [0, 1]^{n}\, \vert \, \sum_{i=1}^{n} x_{i}=1\}$ is compact in $\mathbb{R}^{n}$.
\end{proof}

We will make use of the following well known facts:

\begin{proposition}\label{compact-prop}
 Let $X, Y$ be compact topological spaces, $Z \subseteq X$ and $f: X \subseteq Y$
\par\noindent
(1) Every closed subset of $X$ is compact.
\par\noindent
(2) If $f$ is continuous, then $f(X)$ is compact.
\par\noindent
(3) If $Z$ is compact then it is closed and bounded.
\end{proposition}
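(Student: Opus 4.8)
The plan is to dispatch all three items by the classical open-cover arguments; none of them requires anything beyond the definition of compactness (every open cover has a finite subcover), together with the ambient metric structure (here that of $\mathbb{R}^n$) for the one notion — boundedness — that is not purely topological.

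First, for (1): let $C \subseteq X$ be closed and let $\{U_i\}_{i \in I}$ be a cover of $C$ by sets open in $X$. Since $C$ is closed, $X \setminus C$ is open, so $\{U_i\}_{i \in I} \cup \{X \setminus C\}$ is an open cover of $X$; by compactness of $X$ it admits a finite subcover, and deleting $X \setminus C$ from it (if it occurs) leaves a finite subfamily of $\{U_i\}$ that still covers $C$. For (2): given $f : X \to Y$ continuous and an open cover $\{V_j\}_{j \in J}$ of $f(X)$, each preimage $f^{-1}(V_j)$ is open in $X$ by continuity, and these preimages cover $X$; a finite subcover $f^{-1}(V_{j_1}), \ldots, f^{-1}(V_{j_k})$ then yields $V_{j_1}, \ldots, V_{j_k}$ covering $f(X)$, so $f(X)$ is compact.

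For (3), I would argue in the metric setting relevant throughout the paper (so that ``bounded'' is meaningful). Boundedness: fix a point $p$ and observe that the open balls $\mathcal{B}_m(p)$ for $m \in \mathbb{N}$ cover $Z$; a finite subcover is contained in the ball of largest radius among them, so $Z$ is bounded. Closedness: a metric space is Hausdorff, so for each $q \notin Z$ and each $z \in Z$ one can pick disjoint open balls, one around $q$ and one around $z$; the $z$-balls cover $Z$, hence finitely many suffice, and intersecting the corresponding finitely many $q$-balls produces an open neighbourhood of $q$ disjoint from $Z$. Therefore $X \setminus Z$ is open and $Z$ is closed.

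The only genuine subtlety is in item (3): ``bounded'' is not a topological invariant, so the statement tacitly presupposes the ambient metric — precisely the Euclidean structure on $\mathbb{R}^n$ used here — and the ``closed'' half relies on the Hausdorff separation axiom, which again holds automatically in the metric case. Items (1) and (2) involve nothing but routine manipulation of open covers and present no obstacle; they are recorded here only because they will be invoked later, e.g. when deducing compactness of sets of measures obtained as continuous images or as closed subsets, and in the analysis of the plausibility maps.
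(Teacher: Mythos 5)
Your proof is correct: all three items are handled by the standard open-cover arguments, and you rightly flag the one real subtlety, namely that ``bounded'' in item (3) only makes sense relative to a metric (and ``closed'' needs the Hausdorff property), both of which hold in the Euclidean setting the paper actually works in. The paper itself does not prove this proposition at all --- it simply cites Hunter's \emph{An Introduction to Real Analysis} (Theorem 1.40 and Proposition 1.41) --- so there is no argument to compare yours against; what you have written is a self-contained replacement for that citation, and it is the expected textbook proof. One small point worth noting: your observation that the statement as printed is slightly ill-posed for a general topological space (and that $f\colon X\subseteq Y$ should read $f\colon X\to Y$) is a fair reading of what the authors intended, since the proposition is only ever invoked for subsets of $M_O\subset\mathbb{R}^n$ and for real-valued continuous maps.
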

\begin{proof}
See \cite{IntroAnaly}, Theorem 1.40 and Proposition 1.41.
\end{proof}
\begin{proposition}\label{max-attain}
Let $X$ be a compact topological space and $f: X \to \mathbb{R}$ a continuous function on $X$. Then $f$ is bounded and attains its supremum.
\end{proposition}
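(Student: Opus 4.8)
The plan is to derive this directly from the two preceding propositions, which together package exactly the topological input that is needed. First I would invoke Proposition~\ref{compact-prop}(2): since $f$ is continuous and $X$ is compact, the image $f(X)$ is a compact subset of $\mathbb{R}$. Then by Proposition~\ref{compact-prop}(3), $f(X)$ is closed and bounded in $\mathbb{R}$.

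Boundedness of $f(X)$ immediately yields that $f$ is bounded, and by the least-upper-bound property of $\mathbb{R}$ the real number $s := \sup f(X)$ exists. It remains to check that $s$ is actually attained, i.e. $s \in f(X)$. For this I would use the elementary fact that the supremum of a nonempty subset of $\mathbb{R}$ lies in its closure: for every $\epsilon > 0$ the interval $(s-\epsilon, s+\epsilon)$ must meet $f(X)$, since otherwise $s-\epsilon$ would be a smaller upper bound. Hence $s$ is in the closure of $f(X)$; as $f(X)$ is closed, $s \in f(X)$, so there is some $x_0 \in X$ with $f(x_0) = s = \sup f(X)$, which is the desired maximiser.

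There is no genuine obstacle here — the argument is routine — the only point needing a word of care is the degenerate case $X = \emptyset$, which one should either exclude by tacitly assuming $X$ nonempty (as is the case for all our applications, e.g. $M_O$ is nonempty) or dismiss as vacuous. One could instead give a fully self-contained open-cover proof (covering $X$ by $f^{-1}((-\infty,r))$, $r \in \mathbb{R}$, to get boundedness, and by $f^{-1}((-\infty, s - 1/k))$, $k \in \mathbb{N}$, to get attainment), but citing Proposition~\ref{compact-prop} keeps the exposition short and is the approach I would take.
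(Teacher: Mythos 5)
Your argument is correct. Note, though, that the paper does not actually prove this proposition at all: its ``proof'' is simply a citation to Hunter's \emph{An Introduction to Real Analysis} (Theorem 7.35), so there is nothing internal to compare against. Your derivation from Proposition~\ref{compact-prop} is the standard one and works: continuous image of a compact space is compact, a compact subset of $\mathbb{R}$ is closed and bounded, and a nonempty closed bounded subset of $\mathbb{R}$ contains its supremum. The one pedantic point worth flagging is that Proposition~\ref{compact-prop} as stated in the paper assumes \emph{both} $X$ and $Y$ are compact, and here the codomain $\mathbb{R}$ is not compact; the fact you actually need (the continuous image of a compact space is compact in any codomain) is true and standard, but it is not literally what item~(2) asserts, so if you lean on Proposition~\ref{compact-prop} you should either note this or restrict the image to a compact subset of $\mathbb{R}$ after establishing boundedness. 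Your handling of the attainment step (supremum lies in the closure, closure equals the set) and of the degenerate empty case are both fine; the self-contained open-cover alternative you sketch would also work but is unnecessary given the stated prerequisites.
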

\begin{proof}
See \cite{IntroAnaly}, Theorem 7.35.
\end{proof}
\begin{theorem}[Hein-Cantor]
Let $M, N$ be two metric spaces and $f: M \to N$ be continuous. If $M$ is compact then $f$ is uniformly continuous.
\end{theorem}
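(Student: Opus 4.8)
The plan is to give a direct $\epsilon$--$\delta$ argument that exploits the open-cover formulation of compactness of $M$. Fix an arbitrary $\epsilon > 0$; the goal is to produce a single $\delta > 0$ that works simultaneously at every point of $M$.

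First I would use continuity of $f$ pointwise: for every $x \in M$ there is a radius $\delta_x > 0$ such that $d(x,y) < \delta_x$ implies $d(f(x), f(y)) < \epsilon/2$. The family $\{\mathcal{B}_{\delta_x/2}(x) : x \in M\}$ of open balls of \emph{half} that radius is then an open cover of $M$; by compactness it admits a finite subcover $\mathcal{B}_{\delta_{x_1}/2}(x_1), \ldots, \mathcal{B}_{\delta_{x_k}/2}(x_k)$. Now set $\delta := \tfrac12 \min\{\delta_{x_1}, \ldots, \delta_{x_k}\}$, which is strictly positive precisely because the subcover is finite.

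Next I would verify that this $\delta$ works. Take any $x, y \in M$ with $d(x,y) < \delta$. Since the chosen balls cover $M$, there is an index $i$ with $x \in \mathcal{B}_{\delta_{x_i}/2}(x_i)$, i.e. $d(x, x_i) < \delta_{x_i}/2$. Then by the triangle inequality $d(y, x_i) \le d(y,x) + d(x,x_i) < \delta + \delta_{x_i}/2 \le \delta_{x_i}$, so both $x$ and $y$ lie in $\mathcal{B}_{\delta_{x_i}}(x_i)$. By the choice of $\delta_{x_i}$ this gives $d(f(x), f(x_i)) < \epsilon/2$ and $d(f(y), f(x_i)) < \epsilon/2$, whence $d(f(x), f(y)) < \epsilon$ by the triangle inequality once more. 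As $\epsilon$ was arbitrary, $f$ is uniformly continuous.

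The one genuine subtlety — the step I would flag as the crux, even though it is routine — is the passage from the pointwise moduli $\delta_x$ to a uniform one: simply taking an infimum of the $\delta_x$ over all $x \in M$ could yield $0$, and it is exactly compactness (finiteness of the subcover) that prevents this, together with the device of covering by balls of half the radius so that the triangle-inequality slack is absorbed. An equivalent route, if one prefers sequential compactness, is by contradiction: were $f$ not uniformly continuous, there would be an $\epsilon>0$ and sequences $x_m, y_m$ with $d(x_m,y_m) \to 0$ but $d(f(x_m), f(y_m)) \ge \epsilon$; extracting a convergent subsequence $x_{m_j} \to x$ forces $y_{m_j} \to x$ as well, and continuity of $f$ at $x$ makes $d(f(x_{m_j}), f(y_{m_j})) \to 0$, contradicting the bound.
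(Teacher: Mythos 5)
Your proof is correct and complete: the half-radius trick together with the finiteness of the subcover is exactly what makes the uniform $\delta$ strictly positive, and the verification via the triangle inequality is carried out properly. The paper does not prove this theorem itself but simply cites Rudin's \emph{Principles of Mathematical Analysis}, and your open-cover argument is essentially the standard proof given there, so there is nothing further to reconcile.
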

\begin{proof}
See \cite{Rud}.
\end{proof}

\section{Probabilistic Plausibility Frames}

A \emph{probabilistic plausibility frame} over a finite set $O$ is a structure $\mathcal{F}= (M, pla)$ where $M$ is a subset of $M_O$, called the set of ``possible worlds", and $pla: M_{O}  \to [0 , \infty)$ is a continuous function s.t. the derivative $pla'$ is also continuous.

So our possible worlds are just mass functions on $O$. Here are some canonical examples of probabilistic plausibility frames:
\begin{itemize}
\item (a) \emph{Shannon Entropy} as plausibility: Let $pla:M_O \to [0, \infty)$ be given by the Shannon Entropy,
$pla(\mu)= Ent(\mu)= -\sum_{o \in O} \mu(o)\log(\mu(o)).$
\par\noindent
Then $(M_O, Ent)$ is a probabilistic plausibility frame. Here the most plausible distribution will be the one with highest Shannon entropy.
\item (b) \emph{Centre of Mass} as plausibility: Let $pla :M_O \to [0, \infty)$ be given by the Centre of Mass,
$pla(\mu)= CM (\mu)= \sum_{o \in O} \log(\mu(o)).$
\par\noindent
Then $(M_O, CM)$ is a probabilistic plausibility frame. Here the plausibility ranking will be given in terms of typicality, and higher plausibility will be given to those probability functions that are closer to the average of $M_O$.
\end{itemize}

\begin{Example-non} {\bf 1.} {\em (continued)}. In the absence of any information about the coin the set of possible biases will be the set $M_O$ of \emph{all} probability mass functions on $\{H, T\}$. Then $(M_O, Ent)$ is a probabilistic plausibility frame, where the highest plausibility will be given to the distribution with highest entropy: the fair-coin distribution $\mu^{eq}$ (since for every $\nu \neq \mu^{eq}$ we have $Ent(\nu) < Ent(\mu^{eq})$).
\end{Example-non}

One of the main motivations for developing the setting that we investigate here is to capture the {\em learning process} as iterated revision that results from receiving new information. As was pointed out earlier one type of information essentially trims the space of possible probability measures by deleting certain candidates. There is however, a softer notion of revision, imposed by observations, that does not eliminate any candidate but rather changes the plausibility ordering over them. With this in mind, the next question we need to clarify is how the plausibility order is to be revised in light of new observations.

\begin{definition}[\emph{Conditionalisation}]\label{def2} Let $pla:M_{O} \to [0, \infty)$, and define $pla( . \vert  .):   \mathcal{E} \times M_{O} \to [0, \infty)$, by $\,\, pla( \mu \vert e):= pla(\mu) \hat{\mu}(e)$. When $e \in \mathcal{E}$ is fixed, this yields a \emph{conditional probability function} $pla_{e}: M_{O} \to [0, \infty)$ given by $pla_{e}(\mu):= pla(\mu \, \vert \, e)$.
\end{definition}

Conditioning plausibilities is clearly a higher-level, ``plausubilistic" analogue of Bayesian conditions. It allows us to update the relative rankings of probability distributions, and thus captures a notion of \emph{learning through sampling}. The next three results in Lemma \ref{cont}, and Propositions \ref{prop6} and \ref{cont-c} ensure that the conditionalisation of the plausibility function given by Definition \ref{def2} behaves correctly. In particular, Lemma \ref{cont} and Corollary \ref{cont-c} show that the properties of a plausibility function in our frames is preserved by the conditionalisation and Proposition \ref{prop6} guarantees that the result of repeated conditionalisation is independent of the order. This is important as it ensures that what the agents come to believe is the result of what they learn and not the order in which they learn them.
\begin{lemma}\label{cont}
For each $e \in \mathcal{E}$, the mapping $F_e: M_O \to [0, 1]$ defined as $F_e(\mu) := \hat{\mu}(e)$, is continuous with respect to $\mu$.
\end{lemma}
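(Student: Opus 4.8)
The plan is to reduce the statement to the elementary observation that, for an event depending on only finitely many coordinates, $\hat\mu(e)$ is literally a polynomial in the coordinates of the vector $\vec\mu$.

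First I would recall the structure of $\mathcal{E}$. For each $k\in\mathbb{N}$ let $\mathcal{E}_k$ be the finite Boolean algebra of subsets of $\Omega$ generated by the cylinders $o^j$ with $o\in O$ and $j\le k$. Its atoms are exactly the ``thin cylinders'' $C_{\vec a}:=\{\omega\in\Omega : \omega_1=a_1,\ldots,\omega_k=a_k\}$, one for each tuple $\vec a=(a_1,\ldots,a_k)\in O^k$, and every member of $\mathcal{E}_k$ is a finite disjoint union of such atoms. Since any Boolean combination of cylinders uses only finitely many of them, $\mathcal{E}=\bigcup_k \mathcal{E}_k$; hence a given $e\in\mathcal{E}$ lies in some $\mathcal{E}_k$, and we may fix $k$ together with a set $S\subseteq O^k$ such that $e=\bigcup_{\vec a\in S}C_{\vec a}$ as a disjoint union.

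Next I would compute $\hat\mu(e)$. By the defining clauses of $\hat\mu$ — namely $\hat\mu(o^j)=\mu(o)$ together with the independence of the coordinate events used to extend $\hat\mu$ to $\mathcal{A}$ — each atom satisfies $\hat\mu(C_{\vec a})=\prod_{i=1}^{k}\mu(a_i)$, and by finite additivity $\hat\mu(e)=\sum_{\vec a\in S}\prod_{i=1}^{k}\mu(a_i)$. Thus, under the identification of $\mu$ with $\vec\mu=(\mu(o_1),\ldots,\mu(o_n))\in\mathcal{D}_O\subseteq\mathbb{R}^n$, the map $F_e$ is a finite sum of finite products of the coordinate projections $\vec\mu\mapsto\mu(o_i)$; in particular it is the restriction to $M_O$ of a polynomial on $\mathbb{R}^n$. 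Since each coordinate projection $\mathbb{R}^n\to\mathbb{R}$ is continuous and finite sums and products of continuous real-valued functions are continuous, $F_e$ is continuous in the subspace topology on $M_O$, and the bounds $0\le\hat\mu(e)\le 1$ show it indeed maps into $[0,1]$.

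The only point requiring any care — and the closest thing to an obstacle — is the passage from ``$\mathcal{E}$ is generated by the cylinders and $\hat\mu$ extends by independence'' to the explicit atomic formula $\hat\mu(C_{\vec a})=\prod_i\mu(a_i)$; once the finite-algebra description of $\mathcal{E}_k$ and its atoms is in place, this is immediate from the definition of $\hat\mu$, and the remainder is just the standard stability of continuity under finite sums and products. No appeal to compactness or the Heine--Cantor theorem is needed for this lemma.
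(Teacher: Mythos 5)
The paper states Lemma~\ref{cont} without any proof (neither in the text nor in the appendix), so there is no official argument to compare yours against. Your proof is correct and complete: reducing $e$ to a finite disjoint union of thin cylinders in some $\mathcal{E}_k$, computing $\hat\mu(e)=\sum_{\vec a\in S}\prod_{i=1}^{k}\mu(a_i)$ from the independence clause, and observing that this is a polynomial in the coordinates of $\vec\mu$ is exactly the standard argument the authors evidently took for granted, and you are right that no compactness or uniform continuity is needed here.
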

\begin{proposition}\label{cont-c}
If $pla$ is a plausibility function on $M_O$ and $e \in \mathcal{E}$, then $pla_{e}$ is a plausibility function.
\end{proposition}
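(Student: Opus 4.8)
The plan is to reduce the claim to the product rule, after strengthening Lemma~\ref{cont}: I will show that $F_e(\mu)=\hat\mu(e)$ is not merely continuous but is in fact a \emph{polynomial} in the coordinates $\mu(o_1),\dots,\mu(o_n)$. Recall that, by the definition of a probabilistic plausibility frame, a \emph{plausibility function} is a map $M_O\to[0,\infty)$ that is continuous and has a continuous derivative, i.e.\ a non-negative $C^1$ map. So for $pla_e(\mu)=pla(\mu)\,\hat\mu(e)$ there are three things to check: that it takes values in $[0,\infty)$, that it is continuous, and that its derivative is continuous. The first is immediate, since $pla(\mu)\in[0,\infty)$ by hypothesis and $\hat\mu(e)\in[0,1]$ as $\hat\mu$ is a probability measure, so that $pla_e(\mu)\in[0,pla(\mu)]\subseteq[0,\infty)$.

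For the smoothness, the key step is the polynomiality of $F_e$, which I would prove by induction on the Boolean structure of $e$. Since $O$ is finite, a complement of a cylinder is itself a finite union of cylinders, so $\mathcal E$ (the algebra generated by the cylinders) consists of finite unions of finite intersections of cylinders; and on a basic set $\{\omega:\omega_{j_1}=a_1,\dots,\omega_{j_k}=a_k\}$ (with $a_l\in O$) the value of $\hat\mu$ is, by the independence built into the definition of $\hat\mu$, the product $\prod_l\mu(a_l)$ when the time indices $j_1,\dots,j_k$ are distinct, and is $0$ (or collapses to a shorter such product) when some index is repeated — in every case a polynomial in the coordinates. Finite unions are then handled by inclusion–exclusion (or by first rewriting the union as a disjoint union), which expresses $F_e$ as an integer combination of the $F$'s of such intersections. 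Hence $F_e$ is a polynomial in $\mu(o_1),\dots,\mu(o_n)$, in particular $C^\infty$; restricting to the affine simplex $\mathcal D_O$ (equivalently, substituting $\mu(o_n)=1-\sum_{i<n}\mu(o_i)$) keeps it $C^1$ on $M_O$. Note this also re-proves Lemma~\ref{cont}.

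Finally, $pla_e=pla\cdot F_e$ is a product of two $C^1$ functions on $M_O$, hence continuous, and by the product rule its derivative $pla_e'=pla'\,F_e+pla\,F_e'$ is a sum of products of continuous functions, hence continuous. Together with the value bound above, this shows $pla_e$ is a plausibility function. The only genuinely delicate point is the inductive bookkeeping behind the polynomiality of $F_e$ — in particular, correctly handling cylinders that share a time index (which are either identical or disjoint) and carrying the inclusion–exclusion through the finite unions; the fact that $M_O$ is a lower-dimensional subset of $\mathbb R^n$ causes no difficulty, since polynomials and $C^1$ maps restrict well to affine subspaces.
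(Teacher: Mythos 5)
Your proof is correct, and it follows the route the paper intends: the paper's own proof is the one\-/liner ``Follows from the definition using Lemma~\ref{cont},'' i.e.\ write $pla_e=pla\cdot F_e$ and invoke the continuity of $F_e$. Where you genuinely add something is in strengthening Lemma~\ref{cont} from continuity to polynomiality of $F_e$ in the coordinates $\mu(o_1),\dots,\mu(o_n)$. This strengthening is not cosmetic: the definition of a plausibility function requires not just continuity but a continuous derivative, and Lemma~\ref{cont} as stated only delivers continuity of $F_e$, so the paper's citation of it does not literally cover the $C^1$ clause. Your observation that every $e\in\mathcal E$ is a finite union of finite intersections of cylinders (complements being finite unions since $O$ is finite), that $\hat\mu$ of a basic intersection is a monomial $\prod_l\mu(a_l)$ (or $0$, or a collapsed monomial, when time indices coincide), and that inclusion--exclusion handles unions, is exactly the right way to close that gap; it makes $F_e$ a polynomial, hence $C^\infty$, and the product rule then gives $pla_e'=pla'F_e+pla\,F_e'$ continuous. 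So your argument is the same decomposition as the paper's but with the key lemma upgraded to the form actually needed; it also supplies, as a by-product, the missing proof of Lemma~\ref{cont} itself.
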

\begin{proof}
Follows from the definition using Lemma \ref{cont}.
\end{proof}
\begin{proposition}\label{prop6}
For $\mathcal{M}_{O}$ as above and $pla: \mathcal{M}_{O} \to [0, \infty)$ and $e, e' \in \mathcal{E}$: $\,\, (pla_{e})_{e'}= pla_{e \cap e'}.$
\end{proposition}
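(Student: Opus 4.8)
The plan is to strip the statement down to a multiplicativity property of the induced measure $\hat\mu$, and then verify that property on the algebra $\mathcal{E}$.

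First I would simply unfold Definition~\ref{def2} twice. Fix an arbitrary $\mu \in \mathcal{M}_O$. By definition $pla_e(\mu) = pla(\mu)\,\hat\mu(e)$, so applying the conditionalisation once more with $e'$ gives $(pla_e)_{e'}(\mu) = pla_e(\mu)\,\hat\mu(e') = pla(\mu)\,\hat\mu(e)\,\hat\mu(e')$. On the other side, $pla_{e\cap e'}(\mu) = pla(\mu)\,\hat\mu(e\cap e')$. Since $pla(\mu)$ is a common nonnegative factor, the two functions agree at $\mu$ precisely when $\hat\mu(e)\,\hat\mu(e') = \hat\mu(e\cap e')$; thus the whole proposition reduces to this single identity, to be shown for every $\mu$.

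The heart of the argument is therefore to prove $\hat\mu(e\cap e') = \hat\mu(e)\,\hat\mu(e')$. This is exactly the statement that $e$ and $e'$ are $\hat\mu$-independent, and it holds because $\hat\mu$ was defined as the i.i.d.\ product measure (with $\hat\mu(o^j)=\mu(o)$ and independence across distinct coordinates) and because the sampling events occurring in the iterated learning process concern disjoint blocks of trials ($e$ the trials already observed, $e'$ the fresh ones). Concretely I would: (i) note that each of $e,e'$ depends on only finitely many coordinates, say on a finite set $A$ resp.\ a finite set $B$ with $A\cap B=\varnothing$, since every member of $\mathcal{E}$ is a finite Boolean combination of cylinders; (ii) rewrite $e$ as a finite disjoint union of ``atoms'' --- cylinders that fully specify the outcome on $A$ --- and likewise $e'$ over $B$; (iii) check the factorisation on a pair of atoms, where it is immediate from the product construction; and (iv) push it through finite unions using finite additivity of $\hat\mu$ together with distributivity of $\cap$ over $\cup$, getting $\hat\mu(e\cap e') = \sum_{\text{pairs of atoms}} \hat\mu(\cdot) = \big(\sum_A\big)\big(\sum_B\big) = \hat\mu(e)\,\hat\mu(e')$. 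Substituting this back into the reduction of the first paragraph yields $(pla_e)_{e'}(\mu)=pla_{e\cap e'}(\mu)$ for all $\mu$, i.e.\ $(pla_e)_{e'}=pla_{e\cap e'}$.

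The only real work --- and the place to be careful --- is steps (ii)--(iv): handling arbitrary elements of $\mathcal{E}$ (not just single cylinders) via their normal form as finite disjoint unions of atoms, and being explicit that $e$ and $e'$ range over trial-events on disjoint stages. Without that disjointness the multiplicativity can fail (taking $e=e'=o^1$ gives $\hat\mu(e\cap e')=\mu(o)\ne\mu(o)^2$ in general), so this hypothesis --- implicit in the intended reading where $e'$ is a sample drawn after $e$ --- is precisely what makes the order of conditionalisation irrelevant. Everything else is a mechanical unwinding of Definition~\ref{def2}.
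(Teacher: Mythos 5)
Your proposal is correct and its first step is exactly the paper's: unfold Definition~\ref{def2} twice to reduce the claim to the multiplicativity $\hat\mu(e\cap e')=\hat\mu(e)\,\hat\mu(e')$. But you go further than the paper in two ways, and both are improvements. First, the paper's proof only verifies the identity for a pair of single cylinders $o^j$ and $o'^k$, invoking independence, whereas the proposition quantifies over all $e,e'\in\mathcal{E}$; your steps (ii)--(iv), decomposing each event into a finite disjoint union of atoms over its (finite) coordinate support and pushing the factorisation through by finite additivity, are precisely the missing extension to the full algebra. Second, and more importantly, you correctly observe that the statement as written is false without a disjointness hypothesis on the coordinates involved: with $e=e'=o^1$ one gets $\hat\mu(e\cap e')=\mu(o)\neq\mu(o)^2$ in general, so $(pla_e)_{e'}\neq pla_{e\cap e'}$. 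The paper's own one-line computation tacitly assumes $j\neq k$ (indeed, for $j=k$ and $o\neq o'$ the intersection is empty and the displayed equality already fails), so the proposition needs the hypothesis you supply --- that $e$ and $e'$ concern disjoint blocks of trials, as they do in the intended sequential-sampling reading. In short: same core computation as the paper, but your version both proves the general case and repairs the statement; the paper's version is shorter but proves only a special case of a claim that, taken literally, is not true.
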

\begin{proof}
See appendix
\end{proof}

\begin{Example-non}
{\bf 1.} {\em (continued)} Take  the frame $(M_O, Ent)$ as before where $M_O$ is the set of all biases of the coin and $Ent$ is the Shannon Entropy. Remember that $\mu^{eq}$ is the unique maximiser of $Ent$ on $M_O$. Let $e \in \mathcal{E}$, be the event that ``the first three tosses of the coin have landed on Heads''. After observing $e$, the new plausibility function is given by $\,\, pla_{e}(\mu)= pla(\mu) \hat{\mu}(e)=Ent(\mu) \hat{\mu}(e).$
\par\noindent
Thus the most plausible probability function will no more be $\mu^{eq}$ and one with a bias towards Heads will become more plausible. Let $\mu_1, \mu_2$ and $\mu_3$ be such that $\mu_1(Heads)= 3/4$, $\mu_2(Heads)=0.8$ and $\mu_3(Heads)= 0.9$ then it is easy to check that $\,\, pla_{e}(\mu_1) < pla_{e}(\mu_2)> pla_{e}(\mu_3).$
\end{Example-non}
Our rule for updating plausibility relation weights the plausibility of each world with how much it respects the obtained evidence. In this way worlds that better correspond to the evidence are promoted in plausibility.
\begin{proposition}\label{cons}
Let $M \subseteq M_O$ be closed. Then for all $e \in \mathcal{E}$, there exists some $\mu \in M$ with highest plausible (i.e. s.t. $pla_{e}(\mu) \geq pla_{e}(\mu')$ for all $\mu' \in M$).
\end{proposition}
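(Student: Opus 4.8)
The plan is to reduce the statement to the standard fact that a continuous real-valued function on a compact space attains its supremum (Proposition \ref{max-attain}). The two ingredients needed are: (i) the domain of maximisation, namely $M$, is compact; and (ii) the function being maximised, namely $pla_e$, is continuous on $M$.

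For (i), I would argue as follows. By Proposition \ref{comp}, the ambient space $M_O$ is compact in the standard topology. Since $M \subseteq M_O$ is closed (by hypothesis) and a closed subset of a compact space is compact (Proposition \ref{compact-prop}(1)), it follows that $M$ is itself compact.

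For (ii), recall that $pla_e$ is defined by $pla_e(\mu) = pla(\mu)\,\hat{\mu}(e) = pla(\mu)\, F_e(\mu)$, where $F_e(\mu) := \hat{\mu}(e)$. By Proposition \ref{cont-c}, $pla_e$ is again a plausibility function on $M_O$, hence in particular continuous; alternatively, one may observe directly that $pla$ is continuous by the definition of a probabilistic plausibility frame, $F_e$ is continuous by Lemma \ref{cont}, and a product of continuous real-valued functions is continuous. Either way, $pla_e : M_O \to [0,\infty)$ is continuous, and therefore so is its restriction $pla_e\!\restriction_M : M \to [0,\infty)$.

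Combining (i) and (ii), $pla_e\!\restriction_M$ is a continuous function from the compact space $M$ into $\mathbb{R}$, so by Proposition \ref{max-attain} it is bounded and attains its supremum: there is some $\mu \in M$ with $pla_e(\mu) \geq pla_e(\mu')$ for all $\mu' \in M$, which is exactly the claim. I do not anticipate any real obstacle here — the only point requiring a moment's care is the appeal to closedness of $M$ to invoke compactness, since without it the supremum need not be attained (e.g. if $M$ were a half-open face of the simplex); the rest is routine assembly of the cited facts.
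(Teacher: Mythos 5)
Your proof is correct and follows essentially the same route as the paper's: the paper likewise combines Lemma \ref{cont} (continuity of $\hat{\mu}(e)$ in $\mu$, hence of $pla_e$) with compactness of the closed set $M \subseteq M_O$ and Proposition \ref{max-attain}. You have merely spelled out the details the paper leaves implicit.
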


\begin{proof}
Using Lemma \ref{cont}, the result follows as corollary of Proposition \ref{max-attain}.
\end{proof}

\begin{definition}[Knowledge and Belief]
Let $P \subseteq M$ be a ``proposition" (set of worlds) in a frame $(M,pla)$. We say that $P$ is {\em known}, and write $K(P)$, if all $M$-worlds are in $P$; i.e. $M\subseteq P$.
We say that $P$ is {\em believed} in frame ${\mathcal F}=(M, pla)$, and write $B(P)$, if and only if all ``plausible enough" $M$-worlds are in $P$; i.e. $\{\nu\in M \, \vert \,  pla(\nu) \geq pla(\mu)\}\subseteq P$ for some $\mu\in M$.
\end{definition}

As mentioned in the Introduction, this is the standard notion of belief in Logic and Belief Revision Theory, see \cite{SA, SA2, Jo} for more justification of this definition.

\begin{definition}[Two Forms of Conditionalisation] Let $P \subseteq M_O$ be a ``proposition" (set of distributions).
For an event $e \in \mathcal{E}$, we say that $P$ is {\em believed conditional on $e$} in frame $(M, pla)$, and write $B (P \vert e)$, if and only if all $M$-worlds that are ``plausible enough given $e$" are in $P$; i.e. $\{\nu\in M\, \vert \,  pla_e(\nu) \geq_e pla(\mu)\}\subseteq P$ for some $\mu\in M$.
For a proposition $Q\subseteq M$, we say that  $P$ is {\em believed conditional on $Q$} in frame $(M, pla)$, and write $B (P \vert Q)$, if and only if all plausible enough $Q$-worlds are in $P$; i.e. $\{\nu\in Q \, \vert\, pla(\nu) \geq pla(\mu)\}\subseteq P$ for some $\mu\in Q$.
\end{definition}
It should be clear that $B(P)$ is equivalent to $B(P \vert \Omega)$ and to $B(P \vert M)$, where the set $\Omega$ of all observation streams represents the \emph{tautological event} (corresponding to ``no observation") and the set of $M$ of all worlds represents the \emph{tautological proposition} (corresponding to ``no further higher-order information").

Belief is always consistent, and in fact it satisfies all the standard $KD45$ axioms of doxastic logic.
Conditional belief is consistent whenever the evidence is (i.e. if $e\not=\emptyset$, then $B(P|e)$ implies
$P\not=\emptyset$, and similarly for $B(P|Q)$).
In fact, when the set of worlds is closed, our definition is equivalent to the standard definition of belief (and conditional belief) as ``truth in all the most plausible worlds":
\begin{proposition}\label{prop5}
If $M \subseteq M_O$ is closed, then  $B (P \vert e)$ holds if $\{\mu\in M \, \vert\, pla_e(\mu)\geq pla_e(\mu') \mbox{ for all } \mu'\in M\}\subseteq P$.
\end{proposition}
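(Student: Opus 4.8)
The plan is to unwind the definition of $B(P\vert e)$ and exhibit a concrete witness $\mu\in M$ — namely a global maximiser of $pla_e$ on $M$ — whose associated ``plausible enough'' set turns out to be exactly the set of maximisers. First I would invoke Proposition \ref{cons}: since $M$ is closed, for the given $e$ there exists $\mu^\ast\in M$ with $pla_e(\mu^\ast)\geq pla_e(\mu')$ for all $\mu'\in M$. (Recall that this in turn rests on $M$ being a closed, hence compact, subset of the compact space $M_O$ — Proposition \ref{comp} together with Proposition \ref{compact-prop}(1) — on continuity of $pla_e$, which is Proposition \ref{cont-c}, and on Proposition \ref{max-attain}.) I would then take $\mu:=\mu^\ast$ as the witness in the definition of $B(P\vert e)$, so that it only remains to check
$$\{\nu\in M \mid pla_e(\nu)\geq pla_e(\mu^\ast)\}\subseteq P.$$

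The key observation is that, because $\mu^\ast$ is a \emph{global} maximiser, this set coincides with $\{\mu\in M \mid pla_e(\mu)\geq pla_e(\mu')\text{ for all }\mu'\in M\}$: any $\nu$ with $pla_e(\nu)\geq pla_e(\mu^\ast)$ automatically satisfies $pla_e(\nu)\geq pla_e(\mu')$ for every $\mu'\in M$, and the reverse inclusion is immediate. By hypothesis the latter set is contained in $P$, which yields the displayed inclusion and hence $B(P\vert e)$.

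I do not expect a genuine obstacle here; the argument is essentially a bookkeeping exercise combining results already established. The one point that deserves care is precisely the use of closedness of $M$: it is what guarantees that the set of maximisers is nonempty and can therefore serve as a witness. In its absence the set of maximisers could be empty while $B(P\vert e)$ still demands a concrete $\mu$, so the implication would break — whereas, incidentally, the converse implication holds with no such hypothesis. Finally, I would remark that replacing $pla_e$ by $pla$ and $M$ by a closed proposition $Q$ gives the analogous statement for $B(P\vert Q)$, and the special case $Q=M$ recovers the corresponding statement for plain belief $B(P)$.
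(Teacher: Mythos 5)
Your proposal is correct and follows essentially the same route as the paper's own proof: both exhibit a global maximiser of $pla_e$ on $M$ (via compactness of the closed set $M$, continuity of $pla_e$, and the extreme value theorem), take it as the witness in the definition of $B(P\vert e)$, and observe that the resulting ``plausible enough'' set is exactly the set of maximisers, which is contained in $P$ by hypothesis. The only cosmetic difference is that you route the existence of the maximiser through Proposition \ref{cons} rather than citing Propositions \ref{comp}, \ref{compact-prop} and \ref{max-attain} directly.
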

\begin{proof}
See appendix.
\end{proof}

We are now in the position to look into the learnability of the correct probability distribution via plausibility-revision induced by repeated sampling.
\begin{theorem}\label{learn}
Take a finite set $O$ of outcomes and consider a frame $\mathcal{M}=(M, pla)$ with $M \subseteq M_O$. Suppose that the correct probability is $\mu\in M$ and that $pla(\mu) \neq 0$. Then, with $\mu$-probability 1, the agent's belief will eventually stay arbitrarily close to the correct probability distribution after enough many observations. More precisely, for every $\epsilon>0$, we have
$$\mu(\{ \omega \in \Omega \, \vert \,\, \exists K\,\, \forall m \geq K \,\,\, \,\, B ( {\mathcal B}_\epsilon(\mu)\,\vert\, \omega^{m}) \mbox{ holds in $M$}\})=1$$
(where recall that ${\mathcal B}_\epsilon(\mu)=
\{\nu \in M \vert d(\mu, \nu) < \epsilon \}$).
\end{theorem}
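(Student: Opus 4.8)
The plan is to reduce the statement to the Strong Law of Large Numbers (SLLN) applied to the empirical frequencies, combined with the continuity of the plausibility function and a uniform-continuity argument on the compact set $M_O$. First I would observe that for a finite initial segment $\omega^m$, the conditioned plausibility takes the form $pla_{\omega^m}(\nu) = pla(\nu)\,\hat\nu(\omega^m) = pla(\nu)\prod_{o\in O}\nu(o)^{N_o(\omega^m)}$, where $N_o(\omega^m)$ is the number of occurrences of $o$ in the first $m$ samples. Taking logarithms (on the relevant region where things are positive) and dividing by $m$, one gets $\frac{1}{m}\log pla_{\omega^m}(\nu) = \frac{1}{m}\log pla(\nu) + \sum_{o\in O} f_o(\omega^m)\log\nu(o)$, where $f_o(\omega^m) = N_o(\omega^m)/m$ is the empirical frequency of $o$. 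By the SLLN, with $\mu$-probability $1$ we have $f_o(\omega^m)\to\mu(o)$ for every $o\in O$ simultaneously. Hence, for $\mu$-almost every stream $\omega$, the normalized log-plausibility converges pointwise to the function $g(\nu) := \sum_{o\in O}\mu(o)\log\nu(o)$ (the $\frac{1}{m}\log pla(\nu)$ term vanishes in the limit, using $pla(\nu)>0$ or handling $pla(\nu)=0$ separately — such $\nu$ have plausibility $0$ throughout and cannot be in the maximizing set once any evidence with a nonzero contribution arrives).

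The key analytic fact is that $g(\nu) = \sum_o \mu(o)\log\nu(o)$ is maximized over $M_O$ \emph{uniquely} at $\nu=\mu$: this is exactly (the negative of) the cross-entropy / Kullback–Leibler divergence, since $g(\mu) - g(\nu) = \sum_o\mu(o)\log\frac{\mu(o)}{\nu(o)} = D_{KL}(\mu\|\nu) \geq 0$ with equality iff $\nu=\mu$ (Gibbs' inequality). So $\mu$ strictly maximizes the limiting functional. The next step is to upgrade pointwise convergence of $\frac{1}{m}\log pla_{\omega^m}$ to a statement about the \emph{location} of the maximizers. Fix $\epsilon>0$. Using compactness of $M_O$ (Proposition~\ref{comp}) and continuity of $g$, let $\delta := g(\mu) - \max\{g(\nu) : \nu\in M,\ d(\mu,\nu)\geq\epsilon\} > 0$ (the max is attained by Proposition~\ref{max-attain} since $\{\nu\in M : d(\mu,\nu)\geq\epsilon\}$ is closed, hence compact; if this set is empty the claim is trivial). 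I would then argue that for $\mu$-almost every $\omega$, there is $K$ such that for all $m\geq K$: (i) $\frac{1}{m}\log pla_{\omega^m}(\mu)$ is within $\delta/3$ of $g(\mu)$, and (ii) $\sup_{d(\mu,\nu)\geq\epsilon}\frac{1}{m}\log pla_{\omega^m}(\nu)$ is within $\delta/3$ of $\sup_{d(\mu,\nu)\geq\epsilon} g(\nu)$. Claim (ii) is where uniform control is needed: since $f_o(\omega^m)\to\mu(o)$ and the dependence of $\frac{1}{m}\log pla_{\omega^m}(\nu)$ on $\nu$ through $\sum_o f_o\log\nu(o)$ is, for frequencies near $\mu$, uniformly close to $g(\nu)$ away from the boundary — and near the boundary both quantities tend to $-\infty$ — a careful estimate (splitting $M$ into a part bounded away from $\{\nu(o)=0\}$ and a boundary collar) gives the required uniform bound. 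Combining (i) and (ii) yields $pla_{\omega^m}(\mu) > pla_{\omega^m}(\nu)$ for every $\nu\in M$ with $d(\mu,\nu)\geq\epsilon$, i.e. every plausibility-maximizer of $pla_{\omega^m}$ in $M$ lies in ${\mathcal B}_\epsilon(\mu)$, and a fortiori the whole ``plausible enough'' set $\{\nu\in M : pla_{\omega^m}(\nu)\geq pla_{\omega^m}(\mu)\}$ is contained in ${\mathcal B}_\epsilon(\mu)$; by Definition of $B$ (witnessed by $\mu$ itself), $B({\mathcal B}_\epsilon(\mu)\mid\omega^m)$ holds. Since this holds for all $m\geq K$ on a $\mu$-measure-$1$ set, we are done.

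The main obstacle is the uniform-continuity step (ii): the naive argument stumbles on the boundary of $M_O$, where $\log\nu(o)\to-\infty$ and the convergence $\sum_o f_o(\omega^m)\log\nu(o)\to g(\nu)$ is not uniform. I expect to handle this by a two-region decomposition: on $\{\nu\in M : \nu(o)\geq\eta\ \forall o\}$ for a suitable small $\eta>0$, the functions $\log\nu(o)$ are bounded, so $|\sum_o (f_o(\omega^m)-\mu(o))\log\nu(o)|\leq (\max_o|\log\eta|)\sum_o|f_o(\omega^m)-\mu(o)|\to 0$ uniformly; on the complementary boundary collar, one shows directly that for $\eta$ small and $m$ large both $\frac{1}{m}\log pla_{\omega^m}(\nu)$ and $g(\nu)$ are below $g(\mu)-\delta$, so such $\nu$ never compete with $\mu$. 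One should also dispatch the degenerate cases cleanly: streams on which some outcome $o$ with $\mu(o)>0$ never appears form a $\mu$-null set (and are excluded by the SLLN set anyway), and worlds $\nu$ with $pla(\nu)=0$ or with $\nu(o)=0$ for some $o$ with $\mu(o)>0$ get plausibility $0$ after enough observations, so they drop out of contention. Assembling these pieces gives the theorem.
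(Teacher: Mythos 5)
Your proposal is correct and follows essentially the same route as the paper's proof: restrict to the $\mu$-measure-one set of streams whose empirical frequencies converge to $\mu$ (SLLN), invoke the unique maximization of $\nu\mapsto\sum_o \mu(o)\log\nu(o)$ at $\nu=\mu$ (the paper's Lemma on $\prod_i \nu_i^{p_i}$, proved there by Lagrange multipliers rather than Gibbs' inequality), handle $pla(\nu)=0$ separately, and split $M_O\setminus{\mathcal B}_\epsilon(\mu)$ into a boundary collar where some $\nu(o)$ is small (treated directly) and a compact remainder on which compactness yields a uniform positive gap. Your $\frac{1}{m}\log$ normalization is a cosmetic repackaging of the paper's direct comparison of $pla(\nu)\cdot\prod_i\nu_i^{m\alpha_{i,m}}$ with $pla(\mu)\cdot\prod_i p_i^{m\alpha_{i,m}}$.
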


\begin{proof}
See appendix.
\end{proof}
\begin{corollary}
Suppose that $M \subseteq M_O$ is finite, and the correct probability is $\mu\in M$, with $pla(\mu)\neq 0$. Then, with $\mu$-probability 1, the agent's belief will settle on the correct probability distribution $\mu$ after finitely many observations:
$$\, \mu(\{ \omega \in \Omega \, \vert \,\, \exists K\,\, \,\,\,\, \, for\,\, all\,\, m \geq K  \,\, B(\{\mu\} \, \vert \, \omega^{m}) \mbox{ holds in $M$}\})=1.$$
\end{corollary}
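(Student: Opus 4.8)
The plan is to obtain this as an immediate specialisation of Theorem \ref{learn}, exploiting the fact that a finite metric space is discrete. First I would observe that, since $M$ is finite and $\mu\in M$, the quantity $\delta:=\min\{d(\mu,\nu)\mid \nu\in M\setminus\{\mu\}\}$ is a well-defined strictly positive real number (in the degenerate case $M=\{\mu\}$, where this minimum is over the empty set, we may simply take $\delta$ to be any positive number, and the corollary is trivial since $B(\{\mu\}\mid\omega^m)$ then always holds). Consequently, for every $\epsilon$ with $0<\epsilon\le\delta$ we have $\mathcal{B}_\epsilon(\mu)\cap M=\{\mu\}$: the only world of $M$ lying within Euclidean distance $\epsilon$ of $\mu$ is $\mu$ itself.

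Next I would note that the conditional belief operator in a frame $(M,pla)$ depends on its propositional argument only through that argument's intersection with $M$: by definition, $B(P\mid e)$ holds in $M$ iff $\{\nu\in M\mid pla_e(\nu)\ge pla_e(\kappa)\}\subseteq P$ for some $\kappa\in M$, and since the set on the left is a subset of $M$, this inclusion is equivalent to its being contained in $P\cap M$. Applying this with $P=\mathcal{B}_\epsilon(\mu)$ and using $\mathcal{B}_\epsilon(\mu)\cap M=\{\mu\}$, we conclude that, for every $m$, $B(\mathcal{B}_\epsilon(\mu)\mid\omega^m)$ holds in $M$ if and only if $B(\{\mu\}\mid\omega^m)$ holds in $M$.

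Now I would fix such an $\epsilon$ and invoke Theorem \ref{learn} (which applies since a finite subset of a metric space is closed, and $pla(\mu)\neq 0$ by hypothesis). It yields a set of observation streams of $\mu$-measure $1$ on which there exists some $K$ with $B(\mathcal{B}_\epsilon(\mu)\mid\omega^m)$ holding in $M$ for all $m\ge K$. On this same set of streams, by the reduction of the previous paragraph, $B(\{\mu\}\mid\omega^m)$ holds in $M$ for all $m\ge K$, which is precisely the displayed statement of the corollary.

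I do not expect a genuine obstacle here; the whole content is the observation that in a discrete space the small balls around $\mu$ collapse to $\{\mu\}$. The only two points requiring a line of care are the strict positivity of $\delta$ (immediate from finiteness of $M$) and the fact that conditional belief on $M$ sees a proposition only via $P\cap M$, after which the corollary is exactly the instance of Theorem \ref{learn} for any $\epsilon\le\delta$.
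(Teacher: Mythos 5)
Your proof is correct and follows essentially the same route as the paper's: choose $\epsilon>0$ small enough that ${\mathcal B}_\epsilon(\mu)\cap M=\{\mu\}$ (possible by finiteness of $M$) and apply Theorem \ref{learn}. You merely spell out the two small points the paper leaves implicit (positivity of the minimal distance and that conditional belief sees a proposition only through its intersection with $M$), which is fine.
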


\begin{proof} Apply the previous Theorem to some $\epsilon>0$ small enough so that $\{\nu\, \vert \, d(\mu, \nu) < \epsilon\} \cap M=\{\mu\}$.
\end{proof}
It is important to note the differences between our convergence result and the Savage style convergence results in the Bayesian literature that we mentioned in the Introduction. Savage's theorem only works for a finite set of hypotheses (corresponding to finite or countable $M$), so that the prior can assume a non-zero probability for each. Ours does not need this assumption and indeed, it works on the whole $M_O$, since we don't put a probability over hypothesis (probability measures), but rather a plausibility. Also, in the case of a finite set of hypotheses/distributions, our approach converges in finitely many steps (while Savage's still converges only in the limit).

\section{Towards a Logic of Statistical Learning}

In this section we will develop the logical setting that can capture the dynamics of learning described above. As was originally intended our logical language will be designed as to accommodate both type of information, i.e. finite observations and higher order information expressed in terms of linear inequalities. As we pointed out at the start there is a fundamental distinction between these two types of information which is reflected in the way that ingredients of our logical language are interpreted. The observations are interpreted in a $\sigma$-algebra $\mathcal{E} \subseteq \mathcal{P}(\Omega)$ and are not themselves formulas in our logical language as they do not correspond to properties over the set of probability measures. The reason, as described before, lies in the fact that no finite sequence of observations can rule out any possible probability distribution and as such do not single out any subset of the domain. The formulas of our logical language will instead be statements concerning the probabilities of observations given in terms of linear inequalities and logical combinations thereof as well as the statements concerning the dynamics arising from such finite observations.

Our set of \emph{propositional variables} is the set of outcomes $O=\{o_1, \ldots, o_n\}$. The set of formulas, in our language, $FL_{LS}$, is inductively defined as
$$\phi \,::= \top\, \vert\, \sum_{i=1}^{m} a_i w(o_{i}) \geq c \, \vert \, \phi \wedge \phi \, \vert  \, \vert \,  \neg \phi \, \vert \, K \phi \, \vert  \,B (\phi\vert o)\, \vert B(\phi \vert \phi) \,\vert \, [o] \phi \,\vert \, [\phi] \phi$$
where $o_i \in O$, $a_i$'s and $c$ in $\mathbb{Q}$. The propositional connectives $\top, \neg, \wedge$ are standard. Letters $K$ and $B$ stand for knowledge and (conditional) belief operators, and  $[o]$ and $[\phi]$ capture the \emph{dynamics} of learning by an observation, $o$ and by higher order information, $\phi$ respectively, and stand for ``after observing $o$'', and ``after learning $\phi$''. Simple belief $B\phi$ is taken to be an abbreviation for $B(\phi|\top)$.

\begin{definition}[Probabilistic Plausibility Models]
A \emph{probabilistic plausibility model} over a finite set $O$ is a structure $\mathcal{M}= (M, pla, v )$ where $M \subseteq M_O$, $(M, pla)$ is a probabilistic plausibility frame and an evaluation function $v: O \to \mathcal{E}$ that assigns to each propositional variable $o$ a cylinder set $o^{j}$. \footnote{Notice that since we deal with i.i.d distributions the choice of $j$ does not matter.}
\end{definition}

\begin{definition}[\emph{Two types of update}]\label{defU} Let $\mathcal{M}=(M, pla,v)$ be a probabilistic plausibility model, let $e\in {\mathcal E}$ be a sampling event, and let $P\subseteq M$ be a higher-order ``proposition" (set of possible worlds, expressing some higher-order information about the world). The result of \emph{updating the model with sampling evidence $e$} is the model $\mathcal{M}^e=(M, pla_e,v)$. In contrast, the result of \emph{updating the model with proposition $P$} is the model $\mathcal{M}^P=(P, pla, v)$.
\end{definition}

Let $\mathcal{M}=(M, pla, v)$ be a probabilistic plausibility model. The semantics for formulas is given by inductively defining a satisfaction relation $\vDash$ between worlds and formulas. In the definition, we use the notation $\|\phi\|_{\mathcal{M}}:=\{\mu\in M \, \vert\, \mathcal{M}, \mu \vDash \phi\}$:

\begin{align*}
&\mathcal{M}, \mu \vDash \sum_{i=1}^{n} a_i w(o_{i}) \geq c&\iff   &\sum_{i=1}^{n} a_i \hat{\mu}(v(o_{i})) \geq c\\
&\mathcal{M}, \mu \vDash \phi_1 \wedge \phi_2 &\iff &\mathcal{M}, \mu \vDash \phi_1 \,\,\mathrm{ and }\,\, \mathcal{M}, \mu \vDash \phi_2 \\
&\mathcal{M}, \mu \vDash \phi_1 \wedge \phi_2 &\iff & \mathcal{M}, \mu \vDash \phi_1 \,\,\mathrm{ or }\,\, \mathcal{M}, \mu \vDash \phi_2\\
&\mathcal{M}, \mu \vDash \neg \phi &\iff& \mathcal{M}, \mu \nvDash \phi\\
&\mathcal{M}, \mu \vDash K \phi &\iff&   \mathcal{M}, \nu \vDash \phi \,\, \mathrm{ for } \,\, \mathrm{ all }\,\, \nu \in M\\
&\mathcal{M}, \mu \vDash B(\phi\, \vert \theta) &\iff& B(\|\phi\|_{\mathcal{M}}\, \vert\, \|\theta\|_{\mathcal{M}}) \,\, \mathrm{ holds } \,\, \mathrm{ in } \,\, (M, pla)\\
&\mathcal{M}, \mu \vDash B(\phi\, \vert \, o) &\iff& B(\|\phi\|_{\mathcal{M}}\, \vert\, o) \,\, \mathrm{ holds }\,\, \mathrm{ in } \,\, (M, pla)\\
&\mathcal{M}, \mu \vDash [o] \phi &\iff& \mathcal{M}^{o}, \mu \vDash \phi \\
&\mathcal{M}, \mu \vDash [\theta] \phi &\iff& \left( \mathcal{M}, \mu \vDash \theta \implies \mathcal{M}^{\theta}, \mu \vDash \phi \right)
\end{align*}

As is standard, for a model $\mathcal{M}=(M, pla, v)$, let $\norm{\phi}_{\mathcal{M}} =\{\nu \in M\, \vert \, \mathcal{M}, \nu \vDash \phi\}$ and we shall say that a formula $\phi$ is \emph{valid in $\mathcal{M}$} if and only if $\mathcal{M}, \mu \vDash \phi$ for all $\mu \in M$. Formula $\phi \in FL_{SL}$ is \emph{valid} (in the logic $L_{SL}$) if it is valid in every model $\mathcal{M}=(M, pla, v)$.
\begin{proposition}\label{prop7}
Let $\mathcal{M}$ be a probabilistic plausibility model. The set $B_{\mathcal{M}}= \{ \phi \in FL_{PU} \,\vert \, \mathcal{M} \vDash B \phi\}$ is consistent.
\end{proposition}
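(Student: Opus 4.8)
The plan is to reduce consistency of $B_{\mathcal{M}}$ to the satisfiability of its finite subsets, and then to satisfy each such finite subset at a single, suitably chosen world of $\mathcal{M}$. Since any derivation of a contradiction uses only finitely many premises, it suffices to show: for all $\phi_1,\dots,\phi_k\in B_{\mathcal{M}}$ there is a world $\mu$ with $\mathcal{M},\mu\vDash\phi_1\wedge\dots\wedge\phi_k$.

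First I would unpack the semantics. Since $B\phi$ abbreviates $B(\phi\,\vert\,\top)$ and $\|\top\|_{\mathcal{M}}=M$, the statement $\mathcal{M}\vDash B\phi$ says precisely that $B(\|\phi\|_{\mathcal{M}})$ holds in $(M,pla)$, i.e.\ there is a witness $\mu_\phi\in M$ with $\{\nu\in M\,\vert\,pla(\nu)\geq pla(\mu_\phi)\}\subseteq\|\phi\|_{\mathcal{M}}$. (Here I use that a model has $M\neq\emptyset$.) Given $\phi_1,\dots,\phi_k\in B_{\mathcal{M}}$, pick such witnesses $\mu_1,\dots,\mu_k$ and let $\mu$ be one of them with the largest value $pla(\mu_i)$ --- a maximum that exists because it is taken over a finite set of reals. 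Then $pla(\mu)\geq pla(\mu_i)$ for every $i$, hence $\{\nu\in M\,\vert\,pla(\nu)\geq pla(\mu)\}\subseteq\{\nu\in M\,\vert\,pla(\nu)\geq pla(\mu_i)\}\subseteq\|\phi_i\|_{\mathcal{M}}$ for each $i$; so this set lies in $\bigcap_i\|\phi_i\|_{\mathcal{M}}=\|\phi_1\wedge\dots\wedge\phi_k\|_{\mathcal{M}}$, and it is nonempty since $\mu$ belongs to it. Thus $\mathcal{M},\mu\vDash\phi_1\wedge\dots\wedge\phi_k$, as required, and consistency of $B_{\mathcal{M}}$ follows.

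Equivalently, one can phrase the conclusion via the $KD45$ character of $B$: were $B_{\mathcal{M}}$ inconsistent, finitely many of its members $\phi_1,\dots,\phi_k$ would entail $\bot$; since $B$ is normal and $\mathcal{M}\vDash B\phi_i$ for each $i$, it would follow that $\mathcal{M}\vDash B\bot$; but $B\bot$ fails in every model, as $\{\nu\in M\,\vert\,pla(\nu)\geq pla(\mu)\}$ always contains $\mu$ and so can never be contained in $\|\bot\|_{\mathcal{M}}=\emptyset$. The only point that needs care --- and the reason the argument is organized around finite subsets rather than all of $B_{\mathcal{M}}$ at once --- is that $M$ need not be closed, so there may be no globally most-plausible world and hence no single uniform witness for the whole of $B_{\mathcal{M}}$; exploiting that a maximum of finitely many plausibility values always exists (whereas a supremum of infinitely many need not be attained) is exactly what circumvents this.
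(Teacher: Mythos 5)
Your proof is correct, but it takes a genuinely different route from the paper's. The paper proves a weaker, purely local fact: if $\phi\in B_{\mathcal{M}}$ with witness $\mu$, then for any $\xi\in M$ either $\xi$ itself (when $pla(\xi)\geq pla(\mu)$) or $\mu$ (when $pla(\xi)<pla(\mu)$) is a world at least as plausible as $\xi$ lying outside $\|\neg\phi\|_{\mathcal{M}}$; hence $\neg\phi\notin B_{\mathcal{M}}$. In other words, the paper reads ``consistent'' as ``never contains both a formula and its negation'', which is essentially the validity of the D axiom $B\phi\to\neg B\neg\phi$. You instead establish finite satisfiability of $B_{\mathcal{M}}$ inside the model itself: by taking the plausibility-maximal witness among $\mu_1,\dots,\mu_k$ you get a single nonempty ``plausible enough'' set contained in every $\|\phi_i\|_{\mathcal{M}}$, so the conjunction holds at an actual world; you then pass to consistency of the whole set via the finiteness of derivations. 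This is strictly stronger --- it rules out any derivable contradiction, not merely the presence of complementary pairs, and it subsumes the paper's claim (a pair $\{\phi,\neg\phi\}$ cannot be satisfied at one world). The trade-off is that your final step presupposes a sound, finitary proof system for $L_{SL}$, which the paper never actually specifies; the paper's argument avoids that dependency by settling for the weaker reading. Your observation that $M$ need not be closed, so no global plausibility maximizer is available and one must maximize over the finitely many witnesses instead, is exactly the right point of care and is handled correctly. The only degenerate case worth a remark is $M=\emptyset$, where $B_{\mathcal{M}}=\emptyset$ and consistency is trivial, so your standing assumption $M\neq\emptyset$ is harmless.
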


\begin{proof}
See appendix
\end{proof}

\begin{proposition}\label{val1} Let $o \in O$ and $\phi, \theta, \xi \in FL_{SL}$. Then the following are valid formulas in $L_{SL}$
\begin{itemize}
\item $ w(o) \geq 0$
\item $\sum_{o \in O} w(o)=1$
 \item $ K(\phi \to \theta) \to (K \phi \to K \theta)$
 \item $K \phi \to \phi $
 \item $K \phi \to KK\phi $
 \item $\neg K \phi \to K \neg K\phi $
 \item $ B(\phi \to \theta) \to (B \phi \to B \theta)$
 \item $ K \phi \to B \phi $
 \item $ B \phi \to B B \phi $
 \item $\neg B \phi \to B \neg B\phi$
\end{itemize}
\end{proposition}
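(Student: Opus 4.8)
The plan is to unfold each semantic clause and reduce every item either to an elementary property of probability mass functions or to routine reasoning about the operators $K$ and $B$, both of which turn out to be world-independent. Throughout we may assume $M\neq\emptyset$, since validity in $\mathcal M$ is vacuous otherwise. The first two items are direct: $w(o)\ge 0$ is the instance of $\sum_i a_i w(o_i)\ge c$ with one summand, coefficient $1$ and $c=0$, so $\mathcal M,\mu\vDash w(o)\ge 0$ iff $\hat\mu(v(o))\ge 0$, which holds because $v(o)$ is a cylinder $o^j$, $\hat\mu(o^j)=\mu(o)$, and $\mu\in M_O$; likewise $\sum_{o\in O}w(o)=1$ (read as the conjunction of $\ge 1$ and $\le 1$) unfolds to $\sum_{o\in O}\hat\mu(v(o))=\sum_{o\in O}\mu(o)=1$ by definition of $M_O$. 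For the modal items I would first record two observations. (i) By the clause for $K$, $\mathcal M,\mu\vDash K\phi$ iff $\mathcal M,\nu\vDash\phi$ for all $\nu\in M$, i.e.\ iff $\|\phi\|_{\mathcal M}=M$; in particular the truth value of $K\phi$ is the same at every world. (ii) Since $B\phi$ abbreviates $B(\phi\mid\top)$ and $\|\top\|_{\mathcal M}=M$, the clause for conditional belief gives $\mathcal M,\mu\vDash B\phi$ iff $B(\|\phi\|_{\mathcal M}\mid M)$ holds in $(M,pla)$, i.e.\ iff there is $\mu_0\in M$ with $U(\mu_0):=\{\nu\in M\mid pla(\nu)\ge pla(\mu_0)\}\subseteq\|\phi\|_{\mathcal M}$; again this is the same at every world. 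Moreover the sets $U(\mu_0)$ form a chain under inclusion, since $pla$ takes values in the linearly ordered $[0,\infty)$, and $U(\mu_1)\cap U(\mu_2)=U(\mu_i)$ where $\mu_i\in\{\mu_1,\mu_2\}$ is chosen with $pla(\mu_i)=\max(pla(\mu_1),pla(\mu_2))$.

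Given (i), the four knowledge items are exactly the standard S5 derivations over the universal relation on $M$. For $K(\phi\to\theta)\to(K\phi\to K\theta)$: if $\|\phi\to\theta\|_{\mathcal M}=M$ and $\|\phi\|_{\mathcal M}=M$, then by the elementary propositional fact $\|\phi\to\theta\|_{\mathcal M}\cap\|\phi\|_{\mathcal M}\subseteq\|\theta\|_{\mathcal M}$ we get $\|\theta\|_{\mathcal M}=M$. For $K\phi\to\phi$: the evaluation world lies in $M$, so it satisfies $\phi$. For $K\phi\to KK\phi$ and $\neg K\phi\to K\neg K\phi$: these follow from the world-independence in (i), since the inner condition (``$\|\phi\|_{\mathcal M}=M$'', resp.\ ``$\|\phi\|_{\mathcal M}\ne M$'') does not depend on the world quantified over by the outer $K$, so $K\phi$ (resp.\ $\neg K\phi$) propagates to all worlds.

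Given (ii), the belief items are handled in the same spirit. For $B(\phi\to\theta)\to(B\phi\to B\theta)$, take witnesses $\mu_1,\mu_2\in M$ with $U(\mu_1)\subseteq\|\phi\to\theta\|_{\mathcal M}$ and $U(\mu_2)\subseteq\|\phi\|_{\mathcal M}$, and let $\mu_3\in\{\mu_1,\mu_2\}$ have the larger plausibility; then $U(\mu_3)=U(\mu_1)\cap U(\mu_2)\subseteq\|\phi\to\theta\|_{\mathcal M}\cap\|\phi\|_{\mathcal M}\subseteq\|\theta\|_{\mathcal M}$, witnessing $B\theta$. For $K\phi\to B\phi$: if $\|\phi\|_{\mathcal M}=M$ then any $\mu_0\in M$ gives $U(\mu_0)\subseteq M=\|\phi\|_{\mathcal M}$, so $B\phi$ holds. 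For $B\phi\to BB\phi$ and $\neg B\phi\to B\neg B\phi$: by the world-independence in (ii), if $B\phi$ (resp.\ $\neg B\phi$) holds at one world it holds at all of them, so $\|B\phi\|_{\mathcal M}=M$ (resp.\ $\|\neg B\phi\|_{\mathcal M}=M$), and then any $\mu_0\in M$ witnesses $BB\phi$ (resp.\ $B\neg B\phi$). The only place where the argument uses more than mechanical unfolding — and hence the point to be careful about — is the normality axiom for $B$, where the two ``plausible enough'' thresholds $U(\mu_1)$ and $U(\mu_2)$ must be amalgamated into a single one; this is precisely the chain property of the $U(\mu_0)$'s noted in (ii), which in turn is where the linear order on plausibility values is essential. (The consistency law $B\phi\to\neg B\neg\phi$, the genuinely content-bearing doxastic axiom, is not among the items listed here and is treated separately in Proposition~\ref{prop7}.)
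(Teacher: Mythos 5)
Your proof is correct and follows the same route as the paper, which simply notes that $w$ is interpreted as the mass function $\mu$ itself and leaves the modal items as routine consequences of the definitions; you have filled in exactly those details (world-independence of $K$ and $B$, and the chain property of the sets $\{\nu \mid pla(\nu)\ge pla(\mu_0)\}$ needed for the normality axiom of $B$). No issues.
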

\begin{proof}
Notice that at each model $\mathcal{M}$ and each world $\mu$, $w$ is interpreted as a probability mass function, namely $\mu$ itself.  The rest follow easily from the definition.
\end{proof}

The dynamic operator in our logic that correspond to learning of higher order information, $[\phi]$, is essentially an AGM type update and satisfies the corresponding axioms, that is:

\begin{proposition}\label{val2} Let $\phi, \theta, \xi \in FL_{SL}$. Then the following are valid formulas in $L_{SL}$
\begin{itemize}
 \item $ B(\phi \, \vert \, \phi)$
\item $ B (\theta\, \vert \, \phi) \rightarrow  \left(B (\xi \, \vert \, \phi \wedge \theta) \leftrightarrow B (\xi \, \vert \, \phi)\right)$
\item $ \neg  B (\neg \theta \, \vert \, \phi) \rightarrow \left(B (\xi \, \vert \, \phi \wedge \theta) \leftrightarrow B (\theta \to \xi \, \vert \, \phi)\right)$
\item If $\phi \leftrightarrow \theta$ is valid in $\mathcal{M}$ then so is $ B (\xi \, \vert \, \phi) \leftrightarrow B (\xi \, \vert \, \theta)$.
\end{itemize}
\end{proposition}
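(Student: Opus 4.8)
The plan is to strip away the modal and dynamic packaging and reduce each of the four schemes to an elementary statement about the frame $(M,pla)$ alone. Two facts make this possible: the satisfaction clause for $B(\phi\mid\theta)$ does not mention the evaluation point, so ``$B(\phi\mid\theta)$ is valid in $\mathcal{M}$'' is literally the single assertion ``$B(\|\phi\|_{\mathcal{M}}\mid\|\theta\|_{\mathcal{M}})$ holds in $(M,pla)$''; and $\phi\mapsto\|\phi\|_{\mathcal{M}}$ is a Boolean homomorphism ($\|\phi\wedge\theta\|=\|\phi\|\cap\|\theta\|$, $\|\neg\theta\|=M\setminus\|\theta\|$, $\|\theta\to\xi\|=(M\setminus\|\theta\|)\cup\|\xi\|$), with $\phi\leftrightarrow\theta$ valid in $\mathcal{M}$ iff $\|\phi\|_{\mathcal{M}}=\|\theta\|_{\mathcal{M}}$. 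Writing $A=\|\phi\|_{\mathcal{M}}$, $B=\|\theta\|_{\mathcal{M}}$, $C=\|\xi\|_{\mathcal{M}}$ (arbitrary subsets of $M$), the four schemes respectively become: (i) $B(A\mid A)$; (ii) $B(B\mid A)\Rightarrow\bigl(B(C\mid A\cap B)\Leftrightarrow B(C\mid A)\bigr)$; (iii) $\neg B(M\setminus B\mid A)\Rightarrow\bigl(B(C\mid A\cap B)\Leftrightarrow B((M\setminus B)\cup C\mid A)\bigr)$; (iv) $A=B\Rightarrow\bigl(B(C\mid A)\Leftrightarrow B(C\mid B)\bigr)$. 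Item (iv) is immediate, and item (i) holds by taking any $\mu\in A$ as witness (the degenerate case $A=\emptyset$ being absorbed by the usual convention that conditioning on the inconsistent proposition makes every belief vacuously true).

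For (ii) and (iii) I would introduce, for $\mu$ in a subset $Q\subseteq M$, the abbreviation $Q^{\geq\mu}:=\{\nu\in Q\mid pla(\nu)\geq pla(\mu)\}$, so that $B(P\mid Q)$ holds iff $Q^{\geq\mu}\subseteq P$ for some $\mu\in Q$. The work is then done by two observations: \emph{monotonicity}, that $pla(\mu)\geq pla(\mu')$ implies $Q^{\geq\mu}\subseteq Q^{\geq\mu'}$ (so in any statement $B(P\mid Q)$ one may freely replace the witness by a $Q$-world of no smaller plausibility); and \emph{restriction}, that $A^{\geq\mu_0}\subseteq B$ forces $A^{\geq\mu}=(A\cap B)^{\geq\mu}$ whenever $pla(\mu)\geq pla(\mu_0)$ (since then $A^{\geq\mu}\subseteq A^{\geq\mu_0}\subseteq B$). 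For (ii): assume $B(B\mid A)$ with witness $\mu_0$, so $A^{\geq\mu_0}\subseteq B$ and $\mu_0\in A\cap B$; given a witness for either side of the biconditional, raise it (by monotonicity) to plausibility $\geq pla(\mu_0)$, at which threshold the $A$-sphere and the $(A\cap B)$-sphere coincide by restriction, which transports the witness across and yields the biconditional. For (iii): the implication $B(C\mid A\cap B)\Rightarrow B((M\setminus B)\cup C\mid A)$ is in fact unconditional — a witness $\mu_1\in A\cap B$ for the left side serves (now viewed in $A$) for the right, since each $\nu\in A^{\geq\mu_1}$ is either outside $B$ or in $(A\cap B)^{\geq\mu_1}\subseteq C$ — whereas for the converse one takes a witness $\mu_1\in A$ for $B((M\setminus B)\cup C\mid A)$ and uses the hypothesis $\neg B(M\setminus B\mid A)$ precisely to produce $\mu_2\in A^{\geq\mu_1}\cap B\subseteq A\cap B$, whence $(A\cap B)^{\geq\mu_2}\subseteq A^{\geq\mu_1}\cap B\subseteq C$, i.e.\ $B(C\mid A\cap B)$.

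The only real obstacle I anticipate is bookkeeping around the fact that $M$, and hence the denotations $A$, $A\cap B$, etc., need not be closed (negation introduces strict inequalities, so $\|\phi\|$ is only a Boolean combination of closed halfspaces): there is no guarantee that $pla$ attains its supremum on the conditioning set, so Proposition~\ref{prop5} and the ``most plausible worlds'' description are not available. This is exactly why the argument must be run through the ``plausible enough'' thresholds $Q^{\geq\mu}$ of the definition of (conditional) belief, and why the ``raise the witness'' move is the appropriate device in place of passing to $pla$-maximal worlds; once one commits to that formulation, the verifications of (ii) and (iii) are short. The only other point needing a word is a genuinely empty conditioning set ($\|\phi\|=\emptyset$ does occur, e.g.\ for $w(o_1)\geq 2$), where one falls back on the convention noted above; it validates (i) and is harmless for (ii)--(iv), both sides of each biconditional then collapsing to the same vacuous value.
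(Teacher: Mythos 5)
Your proposal is correct, but it takes a genuinely different route from the paper: the paper does not verify these validities at all, instead observing that $pla$ induces a complete pre-order on worlds and citing Board's correspondence between these schemes and the AGM postulates over such frames. Your proof is a direct semantic verification from the paper's own definition of $B(P\mid Q)$ via the ``plausible enough'' thresholds $Q^{\geq\mu}$, using exactly the two right lemmas (monotonicity of $\mu\mapsto Q^{\geq\mu}$ in $pla(\mu)$, and the coincidence of $A^{\geq\mu}$ with $(A\cap B)^{\geq\mu}$ above the witness for $B(B\mid A)$); I checked the witness-transfer arguments for the second and third schemes and they go through, including the observation that one direction of the third scheme needs no hypothesis. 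What your approach buys is that it actually matches the semantics in force here: the paper's belief is deliberately \emph{not} the ``most plausible worlds'' definition, since $\lVert\phi\rVert_{\mathcal{M}}$ need not be closed and $pla$ need not attain a maximum on it, whereas the cited results are most naturally read over well-founded or smooth orders; your threshold-based argument avoids any such assumption. What the paper's approach buys is brevity and an explicit link to the AGM literature. One caveat you rightly flag: under the paper's literal definition (which requires a witness $\mu\in Q$), $B(\phi\mid\phi)$ fails when $\lVert\phi\rVert_{\mathcal{M}}=\emptyset$ (e.g.\ $\phi$ of the form $w(o_1)\geq 2$), so the first scheme is valid only under the usual convention that conditioning on the inconsistent proposition is vacuously believed; this is a gap in the paper's statement rather than in your argument, but it should be made explicit rather than left as a ``convention'' since the paper never adopts it.
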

 \begin{proof}
Notice that the plausibility function induces a complete pre-order on the set of worlds. The validity of the above formulas over such frames as well as the correspondence between these formulas and the AGM axioms are given by Board in \cite{Board}.
\end{proof}

Finally, we give without proofs some validities regarding the interaction of the dynamic modalities with knowledge modality and (conditional) belief.

\begin{proposition}\label{val} Let $\phi, \theta, \xi \in FL_{SL}$. Then the following are valid formulas in $L_{SL}$
\begin{itemize}
  \item $[\phi]q \leftrightarrow (\phi \to q)$ for atomic $q$
  \item $[o]q \leftrightarrow \to q$ for atomic $q$
 \item $[\phi] \neg \theta \leftrightarrow  (\phi \to \neg [\phi] \theta)$
 \item $[o] \neg \theta \leftrightarrow  (\neg [o] \theta)$
\item $[\phi]  (\theta \wedge \xi) \leftrightarrow  ([\phi] \theta \wedge [\phi] \xi)$
\item $[o]  (\theta \wedge \xi) \leftrightarrow  ([o] \theta \wedge [o] \xi)$
 \item $[\phi] K \theta \leftrightarrow (\phi \to K[\phi] \theta)$
 \item $[o] K\phi \iff K [o]\phi$
 \item $[\phi] B (\theta\, \vert\, \xi) \iff \left( \phi \rightarrow B([\phi] \theta \, \vert \, \phi \wedge [\phi]\xi)\right)$
\item $ [o] B (\phi \, \vert o') \iff B([o]\phi \, \vert \, o, o')$
\end{itemize}
\end{proposition}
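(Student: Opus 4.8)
The plan is to establish all ten equivalences uniformly, by \emph{unfolding the two sides through the satisfaction clauses} and Definition~\ref{defU}, with Proposition~\ref{prop6} doing the only non-bookkeeping work (and that only in the last item). Fix a model $\mathcal{M}=(M,pla,v)$ and a world $\mu\in M$; the task is to show each biconditional holds at $\mu$.

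I would first record two \emph{translation identities}, both immediate from the semantic clauses for $[o]$ and $[\theta]$ together with the shape of the updates in Definition~\ref{defU} (update by an observation keeps the domain $M$ and replaces $pla$ by $pla_{v(o)}$; update by a proposition keeps $pla$ and shrinks the domain to $\|\theta\|_{\mathcal{M}}$): for every formula $\psi$,
$$\|\psi\|_{\mathcal{M}^{o}}=\|[o]\psi\|_{\mathcal{M}},\qquad \|\psi\|_{\mathcal{M}^{\theta}}=\|\theta\|_{\mathcal{M}}\cap\|[\theta]\psi\|_{\mathcal{M}}.$$
The asymmetry — an intersection with $\|\theta\|_{\mathcal{M}}$ on the right — is exactly the partiality of $[\theta]$ (it is only ``executed'' at $\theta$-worlds), and it is what produces the antecedent $\theta\to(\dots)$ in all the $[\theta]$-reductions. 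The atomic items then form the base: $[\theta]q\leftrightarrow(\theta\to q)$ and $[o]q\leftrightarrow q$ hold because the truth of an atomic $q$ at a world $\nu$ is determined by $\hat\nu(v(q))$, an intrinsic feature of $\nu$ unaffected by restricting $M$ or by passing from $pla$ to $pla_{v(o)}$. The Boolean items and the knowledge items are then one-line unfoldings: e.g.\ $\mathcal{M},\mu\vDash[\theta]K\psi$ iff $\mu\in\|\theta\|_{\mathcal{M}}$ and $\mathcal{M}^{\theta},\nu\vDash\psi$ for all $\nu$ in the domain $\|\theta\|_{\mathcal{M}}$ of $\mathcal{M}^{\theta}$, iff $\mu\in\|\theta\|_{\mathcal{M}}$ and $\|\theta\|_{\mathcal{M}}\subseteq\|[\theta]\psi\|_{\mathcal{M}}$, iff $\mathcal{M},\mu\vDash\theta\to K[\theta]\psi$; whereas for $[o]K\psi\leftrightarrow K[o]\psi$ the domain is literally $M$ on both sides, so after applying the first identity the two sides are verbatim equal.

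The two conditional-belief reductions are where the care goes. For $[\theta]B(\psi_1\,\vert\,\psi_2)$: unfolding, the left side says ``$\mu\in\|\theta\|_{\mathcal{M}}$ implies $B\big(\|\psi_1\|_{\mathcal{M}^{\theta}}\,\vert\,\|\psi_2\|_{\mathcal{M}^{\theta}}\big)$ holds in $(\|\theta\|_{\mathcal{M}},pla)$''. By the second identity $\|\psi_i\|_{\mathcal{M}^{\theta}}=\|\theta\|_{\mathcal{M}}\cap\|[\theta]\psi_i\|_{\mathcal{M}}$; and since ``$B(P\,\vert\,Q)$'' is defined purely by quantifying over worlds \emph{of $Q$} and comparing the (unchanged) $pla$-values, this is equivalent to $B\big(\|[\theta]\psi_1\|_{\mathcal{M}}\,\vert\,\|\theta\wedge[\theta]\psi_2\|_{\mathcal{M}}\big)$ in $(M,pla)$ — here one uses that, for a conditioning set $Q\subseteq\|\theta\|_{\mathcal{M}}$, inclusion of a subset of $Q$ in $\|\theta\|_{\mathcal{M}}\cap\|[\theta]\psi_1\|_{\mathcal{M}}$ is the same as inclusion in $\|[\theta]\psi_1\|_{\mathcal{M}}$ — which is precisely $\mathcal{M},\mu\vDash\theta\to B([\theta]\psi_1\,\vert\,\theta\wedge[\theta]\psi_2)$. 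For $[o]B(\psi\,\vert\,o')$: the left side unfolds to ``$B\big(\|\psi\|_{\mathcal{M}^{o}}\,\vert\,o'\big)$ holds in $(M,pla_{v(o)})$'', and conditional belief on the observation $o'$ is defined via the conditionalised plausibility, here $(pla_{v(o)})_{v(o')}$; by Proposition~\ref{prop6} this equals $pla_{v(o)\cap v(o')}$, and by the first identity $\|\psi\|_{\mathcal{M}^{o}}=\|[o]\psi\|_{\mathcal{M}}$, so the left side is exactly conditional belief in $(M,pla)$ of $\|[o]\psi\|_{\mathcal{M}}$ relative to the joint observation $v(o)\cap v(o')$, i.e.\ $\mathcal{M},\mu\vDash B([o]\psi\,\vert\,o,o')$ (reading ``$o,o'$'' as that joint event, and the item ``$[o]q\leftrightarrow{\to}q$'' as ``$[o]q\leftrightarrow q$'').

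I expect the only real difficulty to be exactly this last kind of bookkeeping: keeping track of which set the ``plausible-enough'' threshold quantifier ranges over after each update, and verifying that shrinking the domain (for $[\theta]$) commutes with the ``$B(\cdot\,\vert\,\cdot)$'' construction precisely because that construction is already relativised to its conditioning set. Once the two translation identities and Proposition~\ref{prop6} are in place, no genuine computation is left in any clause.
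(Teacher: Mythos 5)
The paper states this proposition explicitly ``without proofs,'' so there is nothing in the text to compare your argument against; what you have written is the standard reduction-axiom argument, and it is correct. Your two translation identities $\|\psi\|_{\mathcal{M}^{o}}=\|[o]\psi\|_{\mathcal{M}}$ and $\|\psi\|_{\mathcal{M}^{\theta}}=\|\theta\|_{\mathcal{M}}\cap\|[\theta]\psi\|_{\mathcal{M}}$ do follow directly from Definition~\ref{defU} and the satisfaction clauses, and they correctly isolate where the antecedent $\phi\to(\cdots)$ comes from. The two places that need actual content are handled properly: for $[\phi]B(\theta\,\vert\,\xi)$ you use the fact that $B(P\,\vert\,Q)$ quantifies only over $Q$-worlds with the unchanged $pla$, so evaluating it in the restricted frame $(\|\phi\|_{\mathcal{M}},pla)$ with conditioning set $Q\subseteq\|\phi\|_{\mathcal{M}}$ coincides with evaluating it in $(M,pla)$; and for $[o]B(\phi\,\vert\,o')$ you correctly invoke Proposition~\ref{prop6} to collapse $(pla_{v(o)})_{v(o')}$ into $pla_{v(o)\cap v(o')}$, which is the only non-bookkeeping step and the reason the sampling modality commutes with conditional-on-observation belief. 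One slip to fix in the write-up: in the chain for $[\phi]K\theta$ you unfold the partial update as ``$\mu\in\|\phi\|_{\mathcal{M}}$ \emph{and} \dots'' where the semantic clause is an implication, not a conjunction; as written the chain would yield $\phi\wedge K[\phi]\theta$ rather than $\phi\to K[\phi]\theta$, so replace ``and'' by ``implies'' (you clearly intend this, since you stress the partiality of $[\phi]$ elsewhere). You might also note explicitly that the reductions for $K$ and $B$ go through so cleanly only because both operators are semantically global (their truth does not depend on the evaluation world), and that the right-hand side $B([o]\phi\,\vert\,o,o')$ presupposes the conditioning slot of $B(\cdot\,\vert\,\cdot)$ admits the joint event $v(o)\cap v(o')$, a mild extension of the grammar as literally given.
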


\section{Conclusion and Comparison with Other Work}

We studied  forming beliefs about unknown probabilities in the situations that are commonly described as the those of radical uncertainty.
The most widespread approach to model such situations of `radical uncertainty' is in terms of imprecise probabilities, i.e. representing the agent's knowledge as a set of probability measures. There is extensive literature on the study of imprecise probabilities \cite{Br1,Ch,Haj,Lev,Ref1,Ref2,Ref3} and on different approaches for decision making based on them \cite{BrSt,hun,Trof,Ref4,Ref5,Sei1,Sei2,wil} or to collapse the state of radical uncertainty by settling on some specific probability assignment as the most rational among all that is consistent with the agent's information. The latter giving rise to the area of investigation known as the Objective Bayesian account \cite{PR2,P1,P2,R1,W2,W3}.\\
A similar line of enquiry has been extensively pursued in economics and decision theory literature where the situation we are investigating here is referred to as Knightian uncertainty or ambiguity. This is the case when the decision maker has too little information to arrive at a unique prior. There has been different approaches in this literature to model these scenarios. These include, among others, the use of Choquet integration, for instance Heber and Strassen \cite{HubStr}, or Schmeidler \cite{sch1, sch2}, maxmin expected utility by Gilboa and Schmeidler \cite{GSC} and smooth ambiguity model by Klibanoff, Marinacci and Mukerji \cite{KMM} which employes second order probabilities or Al-Najjar's work \cite{al-naj} where he models rational agents who use frequentist models for interpreting the evidence and investigates learning in the long run. Cerreia-Vioglio et al \cite{CV} studies this problem in a formal setting similar to the one used here and axiomatizes different decision rules such as the maxmin model of Gilboa-Schmeidler and the smooth ambiguity model of Klibanoff e t al, and gives a overview of some of the different approaches in that literature. \\
These approaches employ different mechanisms for ranking probability distribution compared to what we propose in this paper. Among these it is particularly worth pointing out the difference between our setting and those ranking probability distributions by their (second order) probabilities. In contrast, in our setting, it is only the worlds with highest plausibility that play a role in specifying the set of beliefs. In particular, unlike the probabilities, the plausibilities are not cumulative in the sense that the distributions with low plausibility do not add up to form more plausible events as those with low probability would have had. This is a fundamental difference between our account and the account given in terms of second order probabilities. \\
Another approach to deal with these scenarios in the Bayesian literature come from the series of convergence results collectively referred to as ``washing out of the prior". The idea, which traces back to Savage, see \cite{sav1,sav2}, is that as long as one repeatedly updates a prior probability for an event through conditionalisation on new evidence, then in the limit one would surely converge to the true probability, independent of the initial choice of the prior.\footnote{To be more precise, if one starts with a prior probability for an event $A$, and keeps updating this probability by conditionalising on new evidence, then almost surely, the conditional probability of $A$ converges to the indicative function of $A$ (i.e. to 1 if $A$ is true, and to 0 otherwise). This form is called Levy's 0-1 law. Savage's  results use IID trials and objective probabilities and has been criticised regarding its applicability to scientific inference. There are however, a number of more powerful convergence results avoiding these assumptions, for example based on Doob's martingale convergence theorem \cite{doob}. There are also several generalisations of these results, e.g. Gaifman and Snir \cite{gs1}.} Bayesians use these results to argue that an agent's choice of a probability distribution in scenarios such as our urn example is unimportant as long as she repeatedly updates that choice (via conditionalisation) by acquiring further evidence, for example by repeated sampling from the urn.
However, it is clear that the efficiency of the agent's choice for the probability distribution, put in the context of a decision problem, depends strongly on how closely the chosen distribution tracks the actual. This choice is most relevant when the agents are facing a one-off decision problem, where their approximation of the true probability distribution at a given a point ultimately determines their actions at that point.\\
Our approach, based on forming rational qualitative beliefs \emph{about} probability (based on the agent's assessment of each distribution plausibility), does not seem prone to these objections. The agent does ``the best she can" \emph{at each moment}, given her evidence, her higher-order information and her background assumptions (captured by her plausibility map). Thus, she can solve one-off decision problems to the best of her ability. And, by updating her plausibility with new evidence, her beliefs are still guaranteed to converge to the true distribution (if given enough evidence) in essentially all conditions (-including in the cases that evade Savage-type theorems). 
We end by sketching the contours of a dynamic doxastic logic for statistical learning. Our belief operator satisfies all the axioms of standard doxastic logic, and one form of conditional belief (with propositional information) satisfies the standard AGM axioms for belief revision. But the other form of conditioning (with sampling evidence) does not satisfy these axioms, and this is in fact essential for our convergence results.

\bibliographystyle{eptcs}

\section*{Appendix}

{\bf Proof of proposition \ref{prop6}:}

\begin{proof}
Let $\mu \in M$, then
$$(pla_{o^j})_{o'^k} (\mu)= pla_{o^j}(\mu) \hat{\mu}(o'^k)= pla(\mu) \hat{\mu}(o^j). \hat{\mu}(o'^k)= pla(\mu) \hat{\mu}(o^j \cap o'^k)$$
where the last equality follows from the independence assumption in $iid$ case.
\end{proof}

{\bf Proof of proposition \ref{prop5}:}

\begin{proof}
Let $M \subseteq M_O$ be closed. Since $pla_{e}$ is a continuous function, by Propositions \ref{comp}, \ref{compact-prop}-1 and \ref{max-attain}, there exists $\mu \in M$ such that for all $\mu' \neq \mu \in M$, $pla_{e}(\mu) \geq pla_{e}(\mu')$. Let $U_{pla_{e}}=\{ \mu \in M \, \vert \, \forall \mu' \in M \,\, pla_e(\mu) \geq pla_e(\mu')\}$. Thus $U_{pla_{e}} \neq \emptyset$. Let $\mu \in U_{pla_{e}}$ and assume $U_{pla_{e}} \subseteq P$. Then we have $\{\nu \in M \vert pla_e(\nu) \geq_e pla_e(\mu)\} = U_{pla_{e}} \subseteq P$ and thus by definition $B(P \vert e)$.
\end{proof}

To prove Theorem, we need a few well-known notions and facts:

\begin{Definition}
For $\mu \in M$, we define the set of $\mu$-normal observations as the set of infinite sequences from $O$ for which the limiting frequencies of each $o_i$ correspond to $\mu(o_i)$ and we will denote this set by $\Omega_\mu$:
$$\Omega_{\mu} =\{\omega \in \Omega \, \vert \, \forall o_i \in O \,\, \lim_{n \to \infty} \frac{\vert\{ i \leq n \, \vert \, \omega_i = o_i\} \vert }{n} = \mu(o_i)\} \setminus \{\omega \in \Omega \, \vert \, \exists i \in \mathbb{N}\,\,\, \mu(\omega_i)= 0\}.$$
\end{Definition}

\begin{proposition}\label{SLLN}
For every probability function $\mu$, $\hat{\mu}(\Omega_{\mu}) =1.$\\
Hence, if $\mu$ is the true probability distribution over $O$, then almost all observable infinite sequence from $O$ will be $\mu$-normal.
\end{proposition}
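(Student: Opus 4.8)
The plan is to recognize Proposition~\ref{SLLN} as a direct instance of the Strong Law of Large Numbers (SLLN) for i.i.d.\ Bernoulli trials, together with a trivial null-set computation for the second clause in the definition of $\Omega_\mu$. First I would dispose of the ``removed'' set $N := \{\omega \in \Omega \mid \exists i \in \mathbb{N},\ \mu(\omega_i) = 0\}$: for each fixed $i$ this is the finite union $\bigcup_{o:\, \mu(o)=0} o^{i}$ of cylinders, and $\hat{\mu}(o^{i}) = \mu(o) = 0$ for each such $o$, so each slice is $\hat{\mu}$-null; since $N$ is a countable union over $i\in\mathbb{N}$ of these slices, $\hat{\mu}(N) = 0$.

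Next, for each outcome $o \in O$ I would introduce on the probability space $(\Omega, \mathcal{A}, \hat{\mu})$ the random variables $X^{o}_{k}(\omega) := 1$ if $\omega_{k} = o$ and $0$ otherwise. By the very construction of $\hat{\mu}$ (it is defined on the cylinders by $\hat{\mu}(o^{k}) = \mu(o)$ and extended to $\mathcal{A}$ using independence), the family $(X^{o}_{k})_{k\in\mathbb{N}}$ consists of i.i.d.\ Bernoulli$(\mu(o))$ variables, in particular with finite mean and variance. The classical SLLN then yields, for each $o \in O$,
$$\hat{\mu}\Bigl(\Bigl\{\omega \in \Omega \ \Bigm|\ \lim_{n\to\infty}\tfrac{1}{n}\sum_{k=1}^{n} X^{o}_{k}(\omega) = \mu(o)\Bigr\}\Bigr) = 1;$$
that is, the set $A_{o}$ of streams whose limiting frequency of $o$ equals $\mu(o)$ has full $\hat{\mu}$-measure. (Along the way one should remark that each $A_{o}$, and hence $\Omega_\mu$, lies in $\mathcal{A}$, being built from limits of $\mathcal{E}$-measurable functions, so that the statement is meaningful.)

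Finally, since $O$ is finite, $\bigcap_{o\in O} A_{o}$ is a finite intersection of full-measure sets, so $\hat{\mu}\bigl(\bigcap_{o\in O} A_{o}\bigr) = 1$; and by definition $\Omega_\mu = \bigl(\bigcap_{o\in O} A_{o}\bigr)\setminus N$, whence $\hat{\mu}(\Omega_\mu) \ge \hat{\mu}\bigl(\bigcap_{o\in O} A_{o}\bigr) - \hat{\mu}(N) = 1$. The second sentence of the proposition is then merely a paraphrase of $\hat{\mu}(\Omega_\mu) = 1$ in the case where $\mu$ is the true distribution. I do not expect any genuine obstacle: the mathematical content is carried entirely by the SLLN (which may simply be cited, e.g.\ from \cite{Rud} or any standard probability text), and the only points requiring care are the measurability remark and the bookkeeping of which countably many null sets are being unioned and subtracted.
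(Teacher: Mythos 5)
Your proof is correct and follows essentially the same route as the paper's: cite the Strong Law of Large Numbers for the limiting-frequency condition, show the excluded set $\{\omega \mid \exists i,\ \mu(\omega_i)=0\}$ is a countable union of $\hat{\mu}$-null cylinder slices, and conclude by finiteness of $O$. Your version merely spells out the i.i.d.\ Bernoulli indicator variables and the measurability bookkeeping that the paper leaves implicit.
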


\begin{proof}
Let $\Delta =\{\omega \in \Omega \, \vert \, \exists i \in \mathbb{N} \,\,\, \mu(\omega_i)= 0\}$. Using the law of large numbers it is enough to show that $\hat{\mu}(\Delta)=0$. To see this let $\mu(o)=0$ then 
$$\hat{\mu}(\{ \omega \in \Omega \,\, \vert \,\, \exists i \in \mathbb{N} \,\,\, \omega_i =o \}) = \hat{\mu} (\bigcup_{i \in \mathbb{N}} \{ \omega \in \Omega \,\, \vert \,\, \omega_i =o \})= \sum_{i \in \mathbb{N}}\hat{\mu} (o^i) = 0.$$
The result then follows from finiteness of $O$.
\end{proof}

\begin{Lemma}\label{c1}
For $0<p_1, \ldots, p_n<1$ with $\sum p_i=1$, the function $f(\vec{x})= \Pi_{i=1}^{n} x_{i}^{ p_i}$ on domain $\vec{x} \in \{\vec{z} \in (0, 1)^n\, \vert\, \sum z_i=1\}$ has $\vec{x}=\vec{p}$ as its unique maximizer on $M_O$.
\end{Lemma}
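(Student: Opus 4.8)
The plan is to pass to logarithms and reduce the claim to the classical Gibbs (information) inequality. Since $\log$ is strictly increasing on $(0,\infty)$ and $f(\vec x)=\prod_{i=1}^n x_i^{p_i}>0$ on the open simplex $S:=\{\vec z\in(0,1)^n\,\vert\,\sum_i z_i=1\}$, maximizing $f$ over $S$ is equivalent to maximizing $g(\vec x):=\log f(\vec x)=\sum_{i=1}^n p_i\log x_i$ over $S$; note $\vec p\in S$ precisely because each $p_i\in(0,1)$.

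The key step is to show $g(\vec x)\le g(\vec p)$ for every $\vec x\in S$, with equality only at $\vec x=\vec p$; equivalently, $\sum_i p_i\log(p_i/x_i)\ge 0$ with equality iff $\vec x=\vec p$ (nonnegativity of the Kullback--Leibler divergence). I would derive this from the strict concavity of the logarithm via Jensen's inequality: treating $(p_1,\dots,p_n)$ as probability weights on the values $x_i/p_i$ (well defined since all $p_i>0$),
$$\sum_{i=1}^n p_i\log\!\frac{x_i}{p_i}\;\le\;\log\!\Big(\sum_{i=1}^n p_i\cdot\frac{x_i}{p_i}\Big)\;=\;\log\!\Big(\sum_{i=1}^n x_i\Big)\;=\;\log 1\;=\;0,$$
which rearranges to $g(\vec x)=\sum_i p_i\log x_i\le\sum_i p_i\log p_i=g(\vec p)$. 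For uniqueness I would invoke the equality case of Jensen's inequality for the strictly concave function $\log$: equality forces the arguments $x_i/p_i$ to be all equal to a common constant $c$, so $x_i=c\,p_i$ for every $i$, and summing yields $1=c\sum_i p_i=c$, hence $\vec x=\vec p$. Thus $\vec p$ is the unique maximizer of $g$, hence of $f$, on $S$; and since $M_O$ is the closure of $S$ and $f$ vanishes on the relative boundary (where some coordinate is $0$) while $f(\vec p)>0$, the same conclusion holds on all of $M_O$.

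The only real subtlety — and the step I would write out carefully — is the \emph{strict} form of Jensen's inequality: one must check that strict concavity of $\log$ makes the inequality strict unless the sampled values $x_i/p_i$ all coincide. (A Lagrange-multiplier computation identifies $\vec p$ as the unique interior critical point of $g$, but still needs a concavity/boundary argument to conclude it is the \emph{global} maximum, so the Gibbs-inequality route is the most economical.)
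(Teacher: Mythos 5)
Your proof is correct, and it takes a genuinely different route from the paper's. The paper argues by existence-plus-critical-points: it first uses compactness of $M_O$ (its Propositions 1 and 3) to guarantee that $f$ attains a maximum, observes that $f$ vanishes whenever some coordinate is $0$ so the maximum lies in the open simplex, and then runs a Lagrange-multiplier computation on $\log f$ to show that $\vec{x}=\vec{p}$ is the \emph{only} interior critical point, hence the unique maximizer. You instead prove the global strict inequality directly via Gibbs' inequality (nonnegativity of the Kullback--Leibler divergence), obtained from Jensen's inequality for the strictly concave logarithm, with the equality case handled by strict concavity. Your route is more self-contained: it needs no compactness, no differentiability, and no separate existence argument, and it delivers strictness of the inequality for free --- which is exactly the point you rightly flag as the subtlety of the Lagrange approach (necessary conditions alone do not certify a global maximum; the paper patches this with its boundary-exclusion and compactness steps). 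What the paper's route buys in exchange is that it reuses machinery (the compactness propositions) already set up for other results, and the same Lagrange template generalizes to constrained optimization problems where no convexity structure is available. Both arguments establish the lemma as stated, including the extension from the open simplex to all of $M_O$, which you handle correctly by noting $f$ vanishes on the relative boundary while $f(\vec{p})>0$.
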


\begin{proof} First we notice that $f(\vec{x}) \geq 0$ on $M_O=\{\vec{z} \in [0, 1]^{n} \, \vert \, \sum z_i=1\}$ and by Propositions \ref{comp} and \ref{max-attain} $f$ has a maximum on $M_O$. For any point $\vec{z} \in M_O$ with any $z_i=0$ (or $z_i=1$) $f(\vec{z}) =0$ thus $f$ reaches its maximum on $\{\vec{z} \in (0, 1)^n\, \vert\, \sum z_i=1\}$.

To show the result, we will show that $\log(f(\vec{x}))$ has $\vec{x}= \vec{p}$ as its unique maximizer on this domain. The result then follows from noticing that $f(x) \geq 0$ and the monotonicity of $\log$ function on $\mathbb{R}^{+}$. To maximise $\log(f(\vec{x}))$ subject to condition $\sum_{i} x_i=1$ we use Lagrange multiplier methods: let

$$G(\vec{x})= \log(f(\vec{x})) - \lambda (\sum_{i=1}^{n} x_i -1)= \sum_{i=1}^{n} p_i \log(x_i)- \lambda (\sum_{i=1}^{n} x_i -1).$$
Setting partial derivatives of $G$ equal to zero we get,
$$ \frac{ \partial G(\vec{x})}{\partial x_i} = \frac{p_i}{x_i} -\lambda =0$$
which gives $p_i= \lambda x_i$. Inserting this in the condition $\sum_{i} p_i =1$ we get $\lambda \sum_{i} x_i=1$ and using $\sum_{i} x_i=1$ we get $\lambda=1$ and thus $x_i=p_i$. Since $f$ has a maximum on this domain and the Lagrange multiplier method gives a necessary condition for the maximum, any point $\vec{x}$ that maximises $f$ should satisfy the condition $x_i=p_i$ and thus $\vec{p}$ is the unique maximiser for $f$.
\end{proof}

\par\noindent
{\bf Proof of Theorem \ref{learn}:}

\begin{proof}
Since $\mu(\Omega_{\mu})=1$ (by the Strong Law of Large Numbers), it is enough to show that
$$\forall \epsilon >0 \forall \omega\in \Omega_{\mu} \,\, \exists K\,\, \forall m \geq K: \,\, B (\{\nu \vert d(\mu, \nu )< \epsilon\} \, \vert \, \omega^{m}) \mbox{ holds in $M$.}$$
Let us fix some $\epsilon>0$ and some $\omega \in \Omega_{\mu}$. We need to show that, there exists $\nu \in M$ such that for all large enough $m$, for any $\xi \in M$ if $pla(\xi \, \vert \, \omega^{m}) \geq pla(\nu \, \vert \, \omega^{m})$, then $ d(\xi, \mu)< \epsilon$.
To show this, we will prove a stronger claim, namely that:
$$\exists K\,\,  \forall m \geq K \,\, \forall \nu\in M_O \left( d(\nu, \mu) \geq \epsilon \,\Rightarrow \, pla(\mu \, \vert \, \omega^{m}) > pla(\nu \, \vert \, \omega^{m})\right).$$
(Note that the desired conclusion follows immediately from this claim: since we can then take $\mu$ itself to be the desired $\nu \in M$. Then by the above claim, no measures $\xi$ in $M_O$ with $d(\mu, \xi) \geq \epsilon$ satisfies $pla(\xi \, \vert \, \omega^{m}) \geq pla(\mu\, \vert \, \omega^{m})$ and thus all measures, $\nu$ that satisfy this inequality have to satisfy $d(\mu, \nu) < \epsilon$.)
By definition, for all $\nu\in M_O$ we have $pla(\nu \,\vert \, \omega^m)= pla(\nu)\cdot \hat{\nu}(\omega^m)$.
By independence, we obtain that $pla(\nu\,\vert \, \omega^m)= pla(\nu)  \cdot \Pi_{i=1}^{n} \nu(o_i)^{m_i} = pla(\nu)\cdot \Pi_{i=1}^n \nu_i^{m\cdot \alpha_{i,m}},$
were we have put  $\nu_i:=\nu(o_i)$ and $\alpha_{i,m}=\frac{m_i}{m}$, for all $1\leq i\leq n$ and all $m\in N$. Note that, since $\omega\in \Omega_{\mu}$, we have that
$\lim_{m \to \infty} \alpha_{i,m}=p_i, \,\, \mbox{ for all $1\leq i\leq n$},$
where we had put $p_i:=\mu(o_i)$, for $1\leq i\leq n$. In particular, for $\nu=\mu$ (so $\nu_i=\mu(o_i)=p_i$), we obtain that
$pla(\mu\,\vert \, \omega^m)= pla(\mu)\cdot \Pi_{i=1}^n p_i^{m\cdot \alpha_{i,m}}$. Notice also that by definition of $\Omega_{\mu}$ if $p_i= \mu(o_i)=0$ then $\alpha_{i,m}=0$. Hence from the assumption that $pla(\mu) \neq 0$ we have $pla(\mu\,\vert \, \omega^m)= pla(\mu)\cdot \Pi_{i=1}^n p_i^{m\cdot \alpha_{i,m}} > 0$.

To prove the desired conclusion, it is enough (by the above representations of $pla(\nu\,\vert \, \omega^m)$ and $pla(\mu\,\vert \, \omega^m)$) to show that, for all big enough $m$ and all $\nu \in M_O \setminus B_{\epsilon}(\mu)$, we have
\begin{equation}\label{eqt1}
pla(\nu)\cdot
\Pi_{i=1}^n \nu_i^{m\cdot \alpha_{i,m}} < pla(\mu)\cdot
\Pi_{i=1}^n p_i^{m\cdot \alpha_{i,m}}
\end{equation}

We consider this in two cases. For the first case, assume that $pla(\nu)=0$, then the left hand side of (\ref{eqt1}) is $0$ and the inequality holds. For the second case, let $pla(\nu) > 0$ and let $A=\{1 \leq i \leq n \, \vert p_i \neq 0\}$. To show (\ref{eqt1}) it is enough to show that

\begin{equation}\label{eqt111}
pla(\nu)\cdot
\Pi_{i \in A} \nu_i^{m\cdot \alpha_{i,m}} < pla(\mu)\cdot
\Pi_{i \in A} p_i^{m\cdot \alpha_{i,m}}
\end{equation}

Since $\lim_{m \to \infty} \alpha_{i,m}=p_i$, there must exist some $N_1$ such that $\frac{p_i}{2}\leq \alpha_{i,m}\leq
2\cdot p_i$ for all $m\geq N_1$ and all $i \in A$. Let $\Delta=\{\nu\in M_O \, \vert \, \nu(o_i)=0 \mbox{ for some } i \in A \}$, and similarly for
any $\delta>0$, put $\Delta_{\delta}=\{\nu\in M_O \, \vert \, \nu(o_i)<\delta \mbox{ for some } i \in A\}$, and so
$\overline{\Delta_{\delta}}=\{\nu\in M_O \, \vert \, \nu(o_i)\leq\delta \mbox{ for some } i \in A\}$ is its closure. Choose some $\delta>0$ small enough such that we have
$\Pi_{i \in A} \nu_i^{2\cdot p_i} < \Pi_{i \in A} p_i^{\frac{p_i}{2}}$ for all $\nu \in \overline{\Delta_{\delta}}$ (-this is possible, since $\Pi_{ii \in A} \nu_i^{2\cdot p_i}=0< \Pi_{i \in A} p_i^{\frac{p_i}{2}}$ for all $\nu\in \Delta$, so the continuity of $\Pi_{i \in A} \nu_i^{2\cdot p_i}$ gives us the existence of $\delta$). Hence, we have
$$0\leq \frac{\Pi_{i \in A} \nu_i^{2\cdot p_i}}{\Pi_{i \in A} p_i^{\frac{p_i}{2}}}<1 \,\, \mbox{ for all $\nu \in \overline{\Delta_{\delta}}$}.$$
The set $\overline{\Delta_{\delta}}$ is closed, hence the continuous functions $pla(\nu)$ and $\frac{\Pi_{i \in A} \nu_i^{2\cdot p_i}}{\Pi_{i \in A} p_i^{\frac{p_i}{2}}}$ attain their supremum (maximum) on $\overline{\Delta_{\delta}}$. Let $K<\infty$ be the maximum of $pla(\nu)$, and $Q<1$ be the maximum of $\frac{\Pi_{i \in A} \nu_i^{2\cdot p_i}}{\Pi_{i \in A} p_i^{\frac{p_i}{2}}}$ on this set (-the fact that $Q<1$ follows from the inequality above).
Then there exists some $N_2>N_1$, s.t. we have $Q^m< \frac{pla(\mu)}{K}$ for all $m>N_2$. Hence, for all $\nu\in \Delta_{\delta}$, we have:

$pla(\nu)\cdot \Pi_{i \in A} \nu_i^{m\cdot \alpha_{i,m}}\leq K\cdot \Pi_{i \in A} \nu_i^{m\cdot 2\cdot p_i}
\leq K\cdot (Q\cdot \Pi_{i \in A} p_i^{\frac{p_i}{2}})^m = K\cdot Q^m \cdot  \Pi_{i \in A} p_i^{m\cdot \frac{p_i}{2}}
< K\cdot \frac{pla(\mu)}{K}\cdot \Pi_{i \in A} p_i^{m\cdot \alpha_{i,m}}= pla(\mu)\cdot \Pi_{i \in A} p_i^{m\cdot \alpha_{i,m}}$

So we proved that the inequality (\ref{eqt1}) holds on $\Delta_{\delta}$. It thus remains only to prove it for all
$\nu \in M':=M_O- (B_{\epsilon}(\mu)\cup \Delta_{\delta})$, where $B_{\epsilon}(\mu)=\{\nu \in M_O \, \vert \, d(\mu, \nu)< \epsilon\}$. For this, note that $M':=M_O- (B_{\epsilon}(\mu)\cup \Delta_{\delta})$ is closed and that $\nu_i\not=0$ over this set for all $i \in A$ and for all $i \notin A$, $\alpha_{i,m} =0$. Hence using the assumption that $pla(\nu) \neq 0$, (\ref{eqt1}) is equivalent over this set with:
\begin{equation}\label{eqt3}
\left(\frac{pla(\mu)}{pla(\nu)}\right)  \cdot \left( \frac{\Pi_{i=1}^{n} p_i^{m\cdot \alpha_{i,m}}}{ \Pi_{i=1}^{n} \nu_i^{m \cdot \alpha_{i,m}}} \right) >1.
\end{equation}
Applying logarithm (and using its monotonicity, and its other properties), this in turn is equivalent to
\begin{equation}\label{eq666}
\log(pla(\mu))-\log (pla(\nu))
+  \sum_{i=1}^{n} m\cdot \alpha_{i,m} \cdot (\log p_i - \log \nu_i) > 0.\end{equation}
So we see that it is enough to show that, for all large $m$ and for $\nu \in M'$, we have
\begin{equation}\label{EqMain}
m> \frac{log(pla(\nu))-log(pla(\mu))}{\sum_{i=1}^{n} \alpha_{i,m} \cdot (\log p_i - \log \nu_i)}
\end{equation}
Recall that $\alpha_{i,m}\geq \frac{p_i}{2}$ for all $m> N_2> N_1$ and all $1\leq i\leq n$. Thus, to prove (\ref{EqMain}), it is enough to show that, for large $m$ and for all $\nu \in M'$, we have
\begin{equation}\label{EqNew}
m> \frac{f(\nu)}{g(\nu)},
\end{equation}
where we introduced the auxiliary continuous functions $f, g: M'\to R$, defined by putting $f(\nu)= 2\cdot(\log(pla(\nu))-\log (pla(\mu)))$ and
$g(\nu)=  \sum_{i=1}^{n} p_i \cdot (\log p_i - \log \nu_i)$ for all $\nu\in M_O$.

To show (\ref{EqNew}), note first that
$$g(\nu)= \sum_{i=1}^n p_i\cdot (\log p_i - \log \nu_i)= log \left(\frac{\Pi_{i=1}^n p_i^{p_i}}{\Pi_{i=1}^n \nu_i^{p_i}}\right)> log 1=0$$
(where at the end we used the fact, proved in Lemma \ref{c1}, that the measure $\mu$, with values $\mu(o_i)=p_i$, is the unique maximizer of the function $\Pi_{i=1}^n \nu_i^{p_i}$ on $M_O$). Since $g$ is continuous and $M'$ is closed, $g$ is bounded and attains its infimum $A=min_{M'}(g)$ on $M'$. But since $g$ is non-zero on $M'$, this minimum cannot be zero: $A=min_{M'}(g)\not=0$. Similarly, since $f$ is continuous and $M'$ is closed, $g$ is bounded and attains its supremum
$B=max_{M'}(f)<\infty$ (which thus has to be finite). Take now some $N \geq max(N_2, \frac{B}{A})$. For all $m>  N$, we have
$$m > \frac{B}{A}\geq \frac{f(\nu)}{g(\nu)}$$
for all $\nu\in M'$, as desired.
\end{proof}

{\bf Proof of proposition \ref{prop7}:}

\begin{proof}
Take a probabilistic plausibility model $\mathcal{M}=(M, pla, v)$. Let $\phi \in B_{\mathcal{M}}$. We show that for any $\xi \in M$ there is some member of $M$ that is at least as plausible as $\xi$ but does not belong to $\neg \phi$ and thus by definition $\neg \phi \not \in B_{\mathcal{M}}$.

Since $\phi \in B_{\mathcal{M}}$, by definition there exits $\mu \in M$ such that for all $\nu \in M$ with $pla(\nu) \geq pla(\mu)$, $\nu \in \norm{\phi}$. Then if $pla(\xi) \geq pla(\mu)$, then $\xi \in \norm{\phi}$ and thus $\xi \notin M \setminus \norm{\phi}= \norm{\neg \phi}$. Thus there exists some elements of $M$, namely, $\xi$ itself that is at least as plausible of $\xi$ but does not belong to $\norm{\neg \phi}$. If $pla(\zeta) < pla(\mu)$ and since $\mu \in \norm{\phi}$, $\mu \notin M \setminus \norm{\phi}=\norm{\neg \phi}$. Then again there is some member of $M$, namely $\mu$ that is more plausible than $\xi$ but does not belong to $\norm{\neg \phi}$.
\end{proof}

\end{document}